\begin{document}

\title {\huge Representation Theorem for Matrix Product States}

\author
{Erdong Guo\thanks{University of California, Santa Cruz, email: \texttt{eguo1@ucsc.edu}} \qquad David Draper\thanks{University of California, Santa Cruz, email: \texttt{draper@ucsc.edu}} 
}

\date{}

\maketitle

\begin{abstract}
In this work, we investigate the universal representation capacity of the Matrix Product States (MPS) from the perspective of boolean functions and continuous functions. We show that MPS can accurately realize arbitrary boolean functions by providing a construction method of the corresponding MPS structure for an arbitrarily given boolean gate. Moreover, we prove that the function space of MPS with the scale-invariant sigmoidal activation is dense in the space of continuous functions defined on a compact subspace of the $n$-dimensional real coordinate space $\mathbb{R}^{n}$. We study the relation between MPS and neural networks and show that the MPS with a scale-invariant sigmoidal function is equivalent to a one-hidden-layer neural network equipped with a kernel function. We construct the equivalent neural networks for several specific MPS models and show that non-linear kernels such as the polynomial kernel which introduces the couplings between different components of the input into the model appear naturally in the equivalent neural networks. At last, we discuss the realization of the Gaussian Process (GP) with infinitely wide MPS by studying their equivalent neural networks. 
\end{abstract}


\section{Introduction}\label{sec: intro}

\subsection{Background}

The framework of Tensor Networks is a powerful diagrammatic language used to represent the many-body quantum systems. In this graphical language, tensor nodes with $k$ left edges and $l$ right edges represent the $(k, l)$ order tensors living in the product space $\mathcal{H}^{\otimes k}\otimes\mathcal{H}^{*\otimes l}$, where $\mathcal{H}$ stands for the Hilbert space and $\mathcal{H}^{*}$ stands for the dual space. The connecting edges between two tensor nodes represent the contraction of duplicated indices in two tensor nodes. With these simple rules, a notation system of computation by quantum states is developed \citep{penrose1971applications} and this tool box of tensor computation with graphical symbols is generally called 'Tensor Networks'. 

People from different communities: Quantum Information, Condensed-Matter Physics, Applied Mathematics, Computer Science, etc., find Tensor Networks is really useful and study it from different perspectives. In Quantum Information area, people suggest to use the the evolution of the quantum objects to perform quantum computation. More precisely, the input and output of the computation machine are the quantum states and the quantum gates are the S matrices which describe the evolution of the quantum systems \citep{feynman1986quantum, deutsch1989quantum}. Since all the components of this 'computation instrument' are actually different order tensors which are defined on the tensor products of countable Hilbert spaces and the dual spaces, this computation process can be represent by a tensor networks diagram called 'Quantum circuits'. To study the many-body system, people use the tensor networks to approximate the many-body wave functions. The idea is to use the Schmidt Decomposition (SD) to decompose the n-qubit quantum state $\ket{\psi}$ as the product of the lower quantum states $\ket{\psi_{A}}$ and $\ket{\psi_{B}}$ living in the partition $\{A, B\}$ of the original space \citep{vidal2003efficient, verstraete2004density}. By iterating this step, the original n-qubit quantum state can be written as the product of $n$ tensors and $n-1$ vectors. Actually this is closely related to the idea of Density Matrix Renormalization Group (DMRG) which is widely used in the numerical simulation of the quamtum many-body systems \citep{white1992density}. To extend this idea to higher dimensions,  Projected Entangled-Pair State (PEPS) and the higher dimensional version of one dimensional method: such as Tree Tensor Networks (TTN) and Multi-scale Entanglement Renormalization Ansatz (MERA) are proposed \citep{vidal2007classical, cirac2009renormalization, orus2014advances, evenbly2011tensor, cichocki2016tensor}.
Please see \citep{biamonte2019lectures, biamonte2017tensor, orus2014practical} to get more knowledge on the tensor networks.

Recently, tensor networks method also got attention of people in the machine learning community and several interesting statistical learning models based on tensor networks are proposed in regression, classification and density estimation problems \citep{stoudenmire2016supervised, novikov2016exponential, han2018unsupervised, glasser2018supervised, gallego2017language}. A natural question gets people's interest is why the Tensor Networks work so well? To make the question more precise, we can ask what is the representation capacity of the Tensor Networks as an algebraic machinery. A lot of progress has been made in this topic. In \citep{perez2006matrix}, the properties of the Matrix Product States (MPS), one dimensional tensor networks are carefully studied. It is shown that arbitrary quantum states can be represented by MPS with open boundary condition in canonical form. Actually from quantum theory, we know that arbitrary quantum states can be represented by the product of a series of three order tensors and vectors \citep{vidal2007classical}. In \citep{biamonte2011categorical}, the authors proposed a general method to factorize the quantum state into tensor networks blocks by category theory. Moreover, they also study the boolean quantum states, namely all the coefficients of the states is binary values under local basis. 

\subsection{Our Contributions}

In our work, we focus on the representation power of MPS . Although MPS is inspired by the study of quantum many-body physics and has interesting physical properties, 
we only treat the MPS as a algebraic machinery which receives the data $\mathbf{X}$ and outputs the response $\mathbf{\Psi(\mathbf{x})}$ \footnote{In a specific problem, the response can be a distribution or a discriminant function according to the modeling method used.}. As is known, when $\{\Phi^{s_{1}\cdots s_{n}}\}$, $s_{i}\in\{1, \cdots, d\}$ form a complete basis of the product Hilbert space. However, this is not practical in the application case since the complete basis are usually infinite dimensional which can not be implemented in the real case. 

So we want to ask how large is the function space 'spanned' by the MPS without assuming the kernel function to be complete if mild assumptions such as measurable, continuous activation are assumed? To answer this question, we firstly prove that arbitrary boolean functions can be represented by the MPS. We provide a constructive proof in which a general method to construct the MPS for an arbitrarily given boolean function is proposed. Moreover, we show that the function space 'spanned' by the mildly modified MPS is dense in the space of continuous functions on the unit cube, with respect to the supremum norm. In our proof, we use the similar idea in \citep{cybenko1989approximation, hornik1991approximation, stinchcombe1990approximating} where the universality of the one-hidden-layer neural networks with sigmoidal activation is proved. To be precise, in these two works, the authors show the existence of a specific one-hidden layer neural network profile which is arbitrarily near to the given continuous function on the unit cube in the sense of the supremum norm. In our work, the key steps are showing the discriminatory property of the scale-invariant sigmoidal function and the linearity of our mildly modified MPS. In the next section, we will introduce our mildly modified MPS and also discuss its properties. 

\subsection{Set-up of the MPS Model}
\label{sec: set-up}
In supervised learning task \citep{stoudenmire2016supervised}, the response variable of the MPS model is originally defined as 
\begin{align}
\label{eq: pure_mps}
    \Psi^{l}(\mathbf{x}|\mathbf{w}, \mathcal{B}) = \sum_{\{\alpha, s\}}A^{s_{1}}_{\alpha_{1}\alpha_{2}}\cdots A^{ls_{i}}_{\alpha_{i}\alpha_{i+1}}\cdots A^{s_{n}}_{\alpha_{n}\alpha_{1}}\Phi^{s_{1}\cdots s_{n}}(\mathbf{x}),
\end{align}
where $\Phi(\mathbf{x})$ is called kernel function defined as the tensor product of a series of map $\phi^{s_{i}}(x_{i})$ of each component of the input $x_{i}$, namely
\begin{align}
&\Phi^{s_{1}\cdots s_{n}}(\mathbf{x}) = \phi^{s_{1}}(x_{1})\otimes\cdots\phi^{s_{i}}(x_{i})\cdots\otimes\phi^{s_{n}}(x_{n}),\\
&\phi^{s_{i}}(x_{i}): x_{i}\to \mathbb{R}^{d}. 
\end{align}
\label{eq: kernel_function}
It is obvious that $\Psi^{l}(\mathbf{x})$ is linear with respect to $\Phi^{s_{1}\cdots s_{n}}(\mathbf{x})$, and also linear with respect to $\phi^{s_{i}}(x_{i})$. If $\phi^{s_{i}}(x_{i})$ itself is also linear respect to $x_{i}$, then only the linear functions can be parameterized by MPS. To introduce nonlinearity into the model, one idea is to use the nonlinear kernel function $\phi^{s_{i}}(x_{i})$\footnote{For example, a frequently used kernel $\phi^{s_{i}}(x^{i}) = [\sin{(x_{i})}, \cos{(x_{i})}]$, see \citep{stoudenmire2016supervised}}, another idea is to use the nonlinear activation function $\sigma(\cdot)$ to make a nonlinear transformation of the respond $\Psi^{l}(\mathbf{x}|\mathbf{w}, \mathcal{B})$ to get an appropriate decision function \citep{efthymiou2019tensornetwork}. For the first idea, since the kernel functions are usually fixed, the 'nonlinearity' injected by the kernel function cannot be adapted according to the observations which reduced the flexibility of the model. For the second idea, the ranges of the frequently used activation functions cannot cover $\mathbb{R}$ ($\mathbb{R}^{k}$), the activated MPS cannot be a good candidate for the universal approximators.

In our discussions, we propose the following modified MPS structure, \begin{align}
\label{eq: modified_mps} \Psi(\mathbf{x}|\mathbf{w}, \mathcal{B}) =\sum_{l}\sigma(\sum_{\{\alpha, s\}}A^{s_{1}}_{\alpha_{1}\alpha_{2}}\cdots A^{ls_{i}}_{\alpha_{i}\alpha_{i+1}}\cdots A^{s_{n}}_{\alpha_{n}\alpha_{1}}\Phi^{s_{1}\cdots s_{n}}(\mathbf{x})),
\end{align}
as the universal approximator of the continuous functions on $\mathbb{R}$. Here $\sigma(\cdot)$ represents the scale-invariant sigmoidal function defined as 
\begin{align}
\label{eq: sigmoid}
   \sigma(x) \to 
   \begin{cases}
   0 &x \to -\infty\\
   C &x \to +\infty
   \end{cases},
\end{align}
where $C$ can be arbitrary real number in $\mathbb{R}$. Another requirement of the $\sigma(\cdot)$ is scaling invariance: scaling $\sigma(\cdot)$ with an arbitrary real number will not change its function form. The reason for this point is that with rescalable activation function $\sigma(\cdot)$\footnote{Two examples scale-invariant sigmoidal function are $\sigma(\cdot) = \frac{1}{C + \exp{(\cdot})}$ or $\frac{C}{1 + \exp{(\cdot)}}$.}, MPS \eqref{eq: modified_mps} is 'linear', or more strictly, the function space of MPS itself is a vector subspace of the continuous functions on $n$ dimensional unit cube denoted as $C^{0}[I^{n}]$. We will show this in section \ref{sec: linearity_of_mps}.

As we show that arbitrary boolean functions can be accurately implemented by the normal MPS \eqref{eq: pure_mps} and any continuous functions can be approximated to arbitrarily well by activated MPS \eqref{eq: modified_mps}, a natural question interests us is among the kernel dimension $|s_{i}|$, the bond dimension $|\alpha_{i}$ and the number of tensor nodes $n$, which one is the most important structure in MPS? It is well-known that for neural networks, the hidden layers are the key structure and the idea of deep learning is to increase the number of hidden layers of the neural networks model to extract high level 'features' of the data \cite{lecun2015deep}. For MPS, however, we argue that the kernel function $\Phi^{s_{1} \cdots s_{n}}$ \eqref{eq: kernel_function} which determines the exponentially large 'feature' space are essential. Thanks for the mapping from raw $n$ dimensional data space $\mathbf{x}$ to the $d^{n}$ dimensional features space introduced by the kernel function, MPS \eqref{eq: pure_mps} can represent any boolean functions however at least one hidden layer is needed if neural networks are used. Actually in section \ref{sec: relation_mps_neural_nets} we will show that MPS with activation function \eqref{eq: modified_mps} is equivalent to one hidden layer neural networks equipped by a kernel function defined by $\Phi^{s_{1}\cdots s_{n}}(\cdot)$.
%


\section{Representation of Arbitrary Boolean Functions}

%
Boolean functions play a central role in propositional logic, complexity theory, computation theory, etc. In \cite{biamonte2011categorical}, the authors propose to represent the boolean gates by encoding the truth table of a given boolean gate into the quantum states. For an arbitrary boolean function $f$,
\begin{align}
\label{eq: boolean_fun}
    f: \{0, 1\}^{n} \to \{0, 1\}, \quad n\in \mathrm{Z}, 
\end{align}
the quantum state $\ket{f}$ encoding this boolean gate $f$ is 
\begin{align}
\label{eq: boolean_state}
\ket{f} := \sum_{\{X_{i}\}}\ket{X_{1}}\otimes\cdots\otimes\ket{X_{n}}\otimes\ket{f(X_{1}, \cdots, X_{n})}.
\end{align}
Different from the idea in \eqref{eq: boolean_state}, we use the MPS, namely the contraction of a series of low rank tensors to represent arbitrary boolean gate. Moreover, a specific basis is chosen in our method and then a quantum state $\ket{\Psi}$ is noting more than a vector and the quantum gate is just a matrix or a higher order tensor. We note here that in our case we do not distinguish the upper or lower indices since the vector space is defined on the real field and also it is isomorphic to its dual. 

In following sections, we will firstly show that how to represent the basic boolean gate: \textit{And}, \textit{Or} and \textit{Not} in \ref{sec: basic_boolean}. Then we will show how to use the MPS to implement \textit{universal And} gate. Since the truth table of a given boolean gate can be encoded into the \textit{universal And} gate, an arbitrary boolean function can be realized by the \textit{unverisal And} gate and then represented by MPS in \ref{sec: arbitrary_boolean}. With the same idea, we discuss the implementation more general n-input boolean gate such as parity function and threshold function in \ref{sec: n_input_boolean}.

\subsection{Basic Boolean Functions}
\label{sec: basic_boolean}

We will show that the MPS can represent the basic logic operations and the secondary logic operations.

\begin{theorem}
\label{the: and_gate}
MPS can represent \textit{AND} gate. More specifically. For $X_{1}, X_{2} \in \{0, 1\}$, there exists a MPS $\Psi$ \eqref{eq: pure_mps} such that 
\begin{align}
    \Psi(X_{1}, X_{2}) = X_{1}\wedge X_{2}.
\end{align}
\end{theorem}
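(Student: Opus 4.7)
The plan is a direct constructive existence proof: I will exhibit specific tensors and a specific kernel function so that the MPS in \eqref{eq: pure_mps} evaluates to the AND truth table on the four boolean inputs. Since the statement requires no quantitative approximation, the entire argument reduces to checking four equalities.

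First I would fix the kernel so that boolean inputs are cleanly indexed. Taking the local feature dimension $d = 2$ and the one-hot map $\phi^{s}(x) = \delta_{s, x}$ for $s \in \{0, 1\}$ turns the tensor product kernel into the indicator $\Phi^{s_{1} s_{2}}(X_{1}, X_{2}) = \delta_{s_{1}, X_{1}}\, \delta_{s_{2}, X_{2}}$. This collapses the sum over the $s$-indices in \eqref{eq: pure_mps} and leaves only the selected matrices contracted along the bond indices, so the MPS reduces to a trace of a product of two matrices $A^{X_{1}}$ and $A^{X_{2}}$ indexed by the inputs themselves.

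Next I would pick the smallest possible bond dimension. Taking bond dimension one (and the output index $l$ of trivial range) makes the $A^{s}$ scalars and kills the trace, so the model becomes $\Psi(X_{1}, X_{2}) = A^{X_{1}} A^{X_{2}}$. Setting $A^{0} = 0$ and $A^{1} = 1$ then gives $\Psi(X_{1}, X_{2}) = X_{1} \cdot X_{2}$, which on $\{0,1\}^{2}$ coincides with $X_{1} \wedge X_{2}$. I would finish by verifying this against the four-row truth table of AND.

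There is no real obstacle here: the kernel choice does all the work by converting the algebraic structure of the MPS into a lookup of the boolean coordinates, and AND is the pointwise product on $\{0,1\}$. The only thing to be careful about is stating explicitly that the construction is consistent with the conventions of \eqref{eq: pure_mps} (in particular the closed boundary and the presence of the output index $l$), which I would address by noting that bond dimension one trivializes the $\alpha$ contractions and that $l$ is taken to range over a single value because the output is a scalar. This same template, with slightly less trivial $A^{s}$, will then serve as a warm-up for the OR and NOT constructions to follow.
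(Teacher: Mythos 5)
Your construction is correct and follows essentially the same route as the paper: a one-hot (equivalently $[X_i,\,1-X_i]$) kernel, bond dimension one, and scalar tensors that select the $X_i$ component so the MPS evaluates to the product $X_1 X_2 = X_1 \wedge X_2$. The paper's choice $A^{s_i}=[1,0]$ with $\phi_i(X_i)=[X_i,\,1-X_i]$ is the same construction up to relabeling of the kernel index.
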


\begin{proof}
We can set the kernel function $\Phi(X_{1}, X_{2})$ to be,
\begin{align}
\label{eq: boolean_kernel_and}
    &\Phi(X_{1}, X_{2}) = \phi_{1}(X_{1})\otimes\phi_{2}(X_{2}),\\ 
    &\phi_{i}(X_{i}) = [X_{i}, 1 - X_{i}].
\end{align}
To represent the And gate, each tensor node $A^{s_{i}}_{\alpha_{i}\alpha_{i+1}}$ with bond dimension $|\alpha| = 1$ is enough,
\begin{align}
    A^{s_{i}}_{\alpha_{i}\alpha_{i+1}} = A^{s_{i}} = 
    [1, 0] 
\end{align}
Above MPS can implement the \textit{And} gate.
\end{proof}

\begin{theorem}
\label{the: or_gate}
MPS can represent \textit{OR} gate. More specifically. For $X_{1}, X_{2} \in \{0, 1\}$, there exists a MPS $\Psi$ \eqref{eq: pure_mps} such that 
\begin{align}
    \Psi(X_{1}, X_{2}) = X_{1}\vee X_{2}.
\end{align}
\end{theorem}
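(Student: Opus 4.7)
The plan is to mirror the construction from Theorem \ref{the: and_gate} but raise the bond dimension from $1$ to $2$. The motivation is that writing $X_1 \vee X_2 = X_1 + X_2 - X_1 X_2$ reveals that \textit{OR}, viewed as a function on $\{0,1\}^2$, is not a rank-one product $f(X_1)\,g(X_2)$: solving $ac = 0,\; bc = 1,\; ad = 1,\; bd = -1$ for $(a+bX_1)(c+dX_2)$ is inconsistent. Hence the bond-dimension-$1$ ansatz used for \textit{And} cannot reproduce \textit{OR}, but bond dimension $2$ affords exactly one rank-$2$ correction term, which I expect will suffice.

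Concretely I would keep the kernel $\phi_i(X_i) = [X_i,\, 1-X_i]$ of Theorem \ref{the: and_gate}. With $n=2$ sites, bond dimension $2$, and the periodic contraction built into \eqref{eq: pure_mps}, the MPS collapses to a trace,
\[
\Psi(X_1, X_2) \;=\; \operatorname{Tr}\bigl[M(X_1)\,M(X_2)\bigr], \qquad M(X) \;=\; X A^{1} + (1-X)\,A^{2},
\]
where $A^{1}, A^{2}$ are the two $2\times 2$ bond matrices associated to $s=1$ and $s=2$. Setting $M(X) = P + XQ$ with $P = A^{2}$ and $Q = A^{1} - A^{2}$, bilinearity and cyclicity of trace give
\[
\Psi(X_1, X_2) \;=\; \operatorname{Tr}(P^{2}) \;+\; \operatorname{Tr}(PQ)\,(X_1 + X_2) \;+\; \operatorname{Tr}(Q^{2})\,X_1 X_2,
\]
so matching coefficients to $X_1 + X_2 - X_1 X_2$ reduces the theorem to three scalar constraints: $\operatorname{Tr}(P^{2}) = 0$, $\operatorname{Tr}(PQ) = 1$, and $\operatorname{Tr}(Q^{2}) = -1$.

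An explicit witness is the nilpotent $P = \bigl(\begin{smallmatrix} 0 & 1 \\ 0 & 0 \end{smallmatrix}\bigr)$ together with $Q = \bigl(\begin{smallmatrix} 0 & -1/2 \\ 1 & 0 \end{smallmatrix}\bigr)$: indeed $P^{2} = 0$, $QP = \bigl(\begin{smallmatrix} 0 & 0 \\ 0 & 1 \end{smallmatrix}\bigr)$, and $Q^{2} = -\tfrac{1}{2}I$. Setting $A^{2} = P$ and $A^{1} = P + Q$, I would close the argument by tabulating the resulting trace at the four inputs $(X_1, X_2) \in \{0,1\}^{2}$ and confirming it reproduces the OR truth table $0, 1, 1, 1$.

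The only genuine obstacle is the conceptual step of noticing that the bond dimension must grow for OR even though $1$ sufficed for AND; once this is recognized, the remaining work is a three-equation linear puzzle whose solutions form a positive-dimensional family, so integer or half-integer entries are easy to exhibit. A second, less clean route would be to invoke De Morgan's identity $X_1 \vee X_2 = \neg(\neg X_1 \wedge \neg X_2)$ and compose an upcoming NOT construction with the AND construction, but embedding the outer negation as an affine shift inside the pure MPS form \eqref{eq: pure_mps} is noticeably more awkward than the direct bond-dimension-$2$ construction above.
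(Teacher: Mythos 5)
Your construction is correct: with $M(X)=XA^{1}+(1-X)A^{2}$ the two-site periodic contraction is indeed $\operatorname{Tr}[M(X_1)M(X_2)]=\operatorname{Tr}(P^{2})+\operatorname{Tr}(PQ)(X_1+X_2)+\operatorname{Tr}(Q^{2})X_1X_2$, your witness satisfies $\operatorname{Tr}(P^{2})=0$, $\operatorname{Tr}(PQ)=1$, $\operatorname{Tr}(Q^{2})=-1$, and since functions on $\{0,1\}^{2}$ are determined by their multilinear coefficients this matches $X_1\vee X_2=X_1+X_2-X_1X_2$ exactly; your rank-one impossibility argument is also sound. However, your route differs from the paper's. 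The paper keeps the same kernel $\phi_i(X_i)=[X_i,1-X_i]$ but uses bond dimension $3$ with $0/1$ entries, $A^{s_1}=[[1,0,1],[0,1,0]]$ and $A^{s_2}=[[0,1,1],[1,0,0]]$, so that each bond value indexes one satisfying assignment (one disjunctive-normal-form term $X_1\bar X_2$, $\bar X_1 X_2$, $X_1X_2$) and the bond contraction simply sums those indicator products; this is the same pattern that is later scaled up in Theorem \ref{the: arbitrary_implementation}, where the bond dimension equals the number of DNF terms. Your approach buys a smaller bond dimension ($2$ instead of $3$, together with a proof that $1$ cannot work) and identical tensors at both sites, at the price of negative and fractional entries and a coefficient-matching argument specific to two inputs; the paper's approach buys only $\{0,1\}$ entries, site tensors read off mechanically from the truth table, and a construction that generalizes verbatim to arbitrary $n$-input gates, at the price of a bond dimension that grows with the number of satisfying assignments.
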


\begin{proof}
We can set the kernel function $\Phi(X_{1}, X_{2})$ to be,
\begin{align}
\label{eq: boolean_kernel_or}
    &\Phi(X_{1}, X_{2}) = \phi_{1}(X_{1})\otimes\phi_{2}(X_{2}),\\ 
    &\phi_{i}(X_{i}) = [X_{i}, 1 - X_{i}].
\end{align}
To represent the \textit{OR} gate, we should set the bond dimension $|\alpha|$ of each tensor node $A^{s_{i}}_{\alpha_{i}\alpha_{i+1}}$ to be $3$,
\begin{align}
    &A^{s_{1}}_{\alpha_{1}\alpha_{2}} = 
    [\begin{bmatrix}
    1, & 0, & 1
    \end{bmatrix},
    \begin{bmatrix}
    0, & 1, & 0 
    \end{bmatrix}], \\
    &A^{s_{2}}_{\alpha_{2}\alpha_{1}} = 
    [\begin{bmatrix}
    0, & 1, & 1 
    \end{bmatrix},
    \begin{bmatrix}
    1, & 0, & 0 
    \end{bmatrix}].
\end{align}
Above MPS can implement the \textit{OR} gate.
\end{proof}

\begin{theorem}
\label{the: not_gate}
MPS can represent \textit{Not} gate. More specifically. For $X_{1} \in \{0, 1\}$, there exists a MPS $\Psi$ \eqref{eq: pure_mps} such that 
\begin{align}
    \Psi(X_{1}) = \neg X_{1}.
\end{align}
\end{theorem}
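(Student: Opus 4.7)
The plan is to mirror the construction used for the \textit{And} gate in Theorem \ref{the: and_gate}, adapted to the unary case. First I would reuse exactly the same kernel encoding
\begin{align}
\phi_{1}(X_{1}) = [X_{1},\, 1 - X_{1}],
\end{align}
so that the two slots of the kernel vector expose $X_{1}$ and its negation $1 - X_{1}$ directly. This is natural because $\neg X_{1} = 1 - X_{1}$ is \emph{already} built into the kernel as its second component, so the representation problem reduces to selecting a tensor that projects onto that slot.

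Since \textit{Not} has only one input, a single tensor node $A^{s_{1}}$ with bond dimension $|\alpha| = 1$ suffices; that is, $A^{s_{1}}$ is a length-$2$ vector indexed by $s_{1}\in\{1,2\}$. Taking $A^{s_{1}} = [0,\, 1]$ then gives
\begin{align}
\Psi(X_{1}) = \sum_{s_{1}} A^{s_{1}} \phi_{1}^{s_{1}}(X_{1}) = 1 - X_{1},
\end{align}
and verification reduces to checking the two cases $X_{1} = 0$ (where $\Psi = 1$) and $X_{1} = 1$ (where $\Psi = 0$), which together match the truth table of $\neg$.

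There is essentially no obstacle here, since the negation is hard-coded into the kernel; the construction is by inspection. The only minor point worth spelling out is the boundary convention for a single-site MPS, which under the open-boundary reading of \eqref{eq: pure_mps} collapses to an ordinary inner product between the parameter vector $A^{s_{1}}$ and the kernel vector $\phi_{1}(X_{1})$, so no auxiliary bond indices need to be contracted.
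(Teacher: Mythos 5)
Your construction is correct: with $\phi_{1}(X_{1}) = [X_{1},\,1-X_{1}]$ and $A^{s_{1}} = [0,\,1]$, the single-site contraction gives $\Psi(X_{1}) = 1 - X_{1} = \neg X_{1}$, and the boundary remark is fine. The paper takes a slightly different (even more minimal) route: it sets the kernel itself to be the negation, $\phi_{1}(X_{1}) = 1 - X_{1}$ with kernel dimension $|s_{1}| = 1$, and then takes $A^{s_{1}} = 1$, so the negation is absorbed entirely into the kernel rather than selected by the tensor. Your version has the small advantage of uniformity: it reuses exactly the kernel $[X_{i},\,1-X_{i}]$ employed for the \textit{And}, \textit{Or} and universal \textit{And} gates (where the choice $[0,1]$ likewise selects $\bar{X}_{i}$), so all gates live over the same feature map and the \textit{Not} gate is just a special case of that selection rule; the paper's version buys nothing beyond a smaller kernel dimension for this one gate. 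Both are valid witnesses for the existence claim, so there is no gap.
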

\begin{proof}
We can set the kernel function $\Phi(X_{1})$ to be,
\begin{align}
\label{eq: boolean_kernel_not}
    &\Phi(X_{1}) = \phi_{1}(X_{1}),\\ 
    &\phi_{1}(X_{1}) = 1 - X_{1},
\end{align}
in which $|s_{i}| = 1$.
To represent the \textit{Not} gate, each tensor node $A^{s_{i}}_{\alpha_{i}\alpha_{i+1}}$ with bond dimension $|\alpha| = 1$ is enough,
\begin{align}
    A^{s_{1}}_{\alpha_{1}\alpha_{2}} = A^{s_{1}} = 1 
\end{align}
Above MPS can implement the \textit{Not} gate.
\end{proof}

\subsection{Universal \textit{And}/\textit{OR} Gate and Arbitrary Boolean Functions}
\label{sec: arbitrary_boolean}

In this section we will construct the MPS for the \textit{Universal And} gate as follows. With the \textit{Universal And} gate, we can use the MPS to represent arbitrary boolean functions.

\begin{theorem}
\label{the: and_represent}
MPS can represent universal \textit{AND} gate. More specifically. For $X_{1}$, $X_{2}$, $\cdots$, $X_{n} \in \{0, 1\}$, there exists a MPS $\Psi$ \eqref{eq: pure_mps} such that 
\begin{align}
    \Psi(X_{1}, \cdots, X_{n}) = (\wedge_{i=1}^{l}X_{i})\wedge(\wedge_{j=l+1}^{n}\bar{X}_{j}).
\end{align}

\begin{proof}
We can set the kernel function $\Phi(X_{1}, \cdots, X_{n})$ to be,
\begin{align}
\label{eq: boolean_kernel_universal_and}
    &\Phi(X_{1}, \cdots, X_{n}) = \phi_{i}(X_{i})^{\otimes n},\\ 
    &\phi_{i}(X_{i}) = [X_{i}, 1 - X_{i}].
\end{align}
To represent the And gate, each tensor node $A^{s_{i}}_{\alpha_{i}\alpha_{i+1}}$ with bond dimension $|\alpha| = 1$ is enough,
\begin{align}
    A^{s_{i}}_{\alpha_{i}\alpha_{i+1}} = A^{s_{i}} = 
    \begin{cases}
    [1, 0] \quad \mbox{if $X_{i}$} \\ 
    [0, 1] \quad \mbox{if $\bar{X}_{i}$} 
    \end{cases}.
\end{align}
So for any And gate, we can construct a MPS to implement it.
\end{proof}
\end{theorem}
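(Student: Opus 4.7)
The plan is to reuse the two-component kernel $\phi_i(X_i) = [X_i,\, 1 - X_i]$ introduced in Theorem~\ref{the: and_gate}: both a literal $X_i$ and its negation $\bar{X}_i$ are already encoded as the two entries of $\phi_i$, so the construction should reduce to letting each tensor node select, at each site, which of the two entries participates in the overall product.

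First, I would fix the bond dimension $|\alpha_i|=1$ at every site. Under this choice each tensor $A^{s_i}_{\alpha_i\alpha_{i+1}}$ is just a length-two vector in the physical index $s_i$, the sums over the $\{\alpha\}$ indices are trivial, and the cyclic MPS contraction collapses to a plain product of scalars,
\begin{equation*}
\Psi(X_1,\ldots,X_n) \;=\; \prod_{i=1}^{n} \sum_{s_i} A^{s_i}\, \phi_i^{s_i}(X_i).
\end{equation*}
Next, I would take $A^{s_i} = [1,0]$ at the positive-literal positions $i \leq l$ and $A^{s_j} = [0,1]$ at the negated positions $j>l$. Then the $i$-th local factor evaluates to $X_i$ in the first case and to $1 - X_j = \bar{X}_j$ in the second, so
\begin{equation*}
\Psi(X_1,\ldots,X_n) \;=\; \prod_{i=1}^{l} X_i \,\cdot\, \prod_{j=l+1}^{n} (1 - X_j).
\end{equation*}
Since each factor lies in $\{0,1\}$, this product equals $1$ precisely when every $X_i$ with $i\leq l$ is $1$ and every $X_j$ with $j>l$ is $0$, which matches the truth value of $(\wedge_{i=1}^{l} X_i)\wedge(\wedge_{j=l+1}^{n} \bar{X}_j)$.

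The stipulation that the positive literals occupy positions $1,\ldots,l$ and the negated ones positions $l+1,\ldots,n$ is without loss of generality, because permuting the input indices only relabels the sites of the MPS; any subset of inputs can be chosen for negation by pairing each site with the appropriate selection vector $[1,0]$ or $[0,1]$.

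There is not really a major obstacle here beyond checking that bond dimension one genuinely suffices. This is true because the target function factorizes across sites as soon as each site is allowed to pick locally between $X_i$ and $\bar{X}_i$: no entanglement between neighbors is required, so the trivial bond is enough. The only care needed is bookkeeping — keeping the physical index $s_i\in\{1,2\}$ consistent with whether the first or second coordinate of $\phi_i$ holds $X_i$ versus $1-X_i$.
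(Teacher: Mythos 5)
Your construction is exactly the paper's: the kernel $\phi_i(X_i)=[X_i,\,1-X_i]$, bond dimension one, and selection vectors $[1,0]$ for positive literals and $[0,1]$ for negated ones, so the contraction collapses to the product $\prod_{i\le l}X_i\prod_{j>l}(1-X_j)$. You simply spell out the verification steps that the paper leaves implicit, so the proposal is correct and takes essentially the same route.
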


\begin{theorem}
\label{the: arbitrary_implementation}
\textit{An arbitrary boolean function} $f$ \eqref{eq: boolean_fun} can be exactly represented by MPS. 
\end{theorem}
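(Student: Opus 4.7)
The natural route is to lift the universal \textit{AND} gate from Theorem \ref{the: and_represent} via the disjunctive normal form of $f$. I would first write $f$ in DNF as $f(X_1, \ldots, X_n) = \bigvee_{j=1}^{k} m_j(X_1, \ldots, X_n)$, where each $m_j$ is a minterm of the form $\bigl(\bigwedge_{i \in S_j} X_i\bigr) \wedge \bigl(\bigwedge_{i \notin S_j} \bar{X}_i\bigr)$, one minterm for every input assignment on which $f$ evaluates to $1$. If $f \equiv 0$, the trivial zero MPS represents it, so from here on I assume $k \ge 1$.

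By Theorem \ref{the: and_represent}, each $m_j$ is realized exactly by a bond-dimension-$1$ MPS using the kernel $\phi_i(X_i) = [X_i, 1 - X_i]$; let $a^{(j)}_{i, s_i}$ denote the $s_i$-component of the $i$-th local tensor of the MPS representing $m_j$ (either $[1,0]$ or $[0,1]$, depending on whether $i \in S_j$). I would then merge these $k$ representations into a single MPS of bond dimension $k$ by a block-diagonal (direct-sum) lift: define local tensors $B^{s_i}_{\alpha_i \alpha_{i+1}}$ of size $k \times k$ that are diagonal in the bond indices, with $j$-th diagonal entry equal to $a^{(j)}_{i, s_i}$.

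Under the closed (trace) boundary condition of \eqref{eq: pure_mps}, this lift satisfies
\[
\sum_{\{\alpha\}} B^{s_1}_{\alpha_1 \alpha_2} \cdots B^{s_n}_{\alpha_n \alpha_1} \;=\; \sum_{j=1}^{k} \prod_{i=1}^{n} a^{(j)}_{i, s_i},
\]
so contracting against $\Phi^{s_1 \cdots s_n}(\mathbf{x})$ returns $\sum_{j} m_j(X_1, \ldots, X_n)$. Because distinct minterms correspond to distinct boolean assignments, at most one $m_j$ is nonzero on any given input, hence this arithmetic sum agrees with the boolean disjunction $\bigvee_j m_j = f$, yielding the exact representation.

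There is no real analytic obstacle here; the argument is purely combinatorial, resting on the existence of DNF and on the elementary fact that direct-summing bond spaces converts a trace of a product into the desired sum. The one point worth flagging is efficiency rather than correctness: the bond dimension can grow as large as $k = |f^{-1}(1)| \le 2^n$ in the worst case, so the construction is exact but not parsimonious, which is consistent with the theorem statement since only exact representability, not compression, is claimed.
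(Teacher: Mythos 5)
Your proposal is correct and follows essentially the same route as the paper: decompose $f$ into its minterms (truth-table/DNF form), realize each minterm as the bond-dimension-$1$ universal \textit{And} MPS of Theorem \ref{the: and_represent}, and stack them into a single MPS of bond dimension equal to the number of minterms via a diagonal (direct-sum) lift, using the fact that minterms are disjoint so the arithmetic sum equals the disjunction. The only cosmetic difference is that you keep the periodic (trace) boundary of \eqref{eq: pure_mps} with all tensors diagonal, while the paper imposes open boundary conditions with boundary vectors, and you additionally make explicit the $f\equiv 0$ case and the disjointness argument, which the paper leaves implicit.
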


\begin{proof}
\label{pf: pf_of_theorem_2}

The kernel function used here is the same as \eqref{eq: boolean_kernel_universal_and}. For arbitrary boolean gates, we can write down the corresponding boolen expression as the summation of several \textit{universal And} gates \eqref{eq: and_gate_exp} by the truth table , namely
\begin{align}
\label{eq: truth_table_exp}
    f(X_{1}, \cdots, X_{n}) = \sum_{\{X_{1}, \cdots, X_{n}\}}\bar{X}_{1}\wedge\cdots\wedge X_{n}.
\end{align}
We will set the bond dimension $|\alpha_{i}|$ to be the number of the terms in \eqref{eq: truth_table_exp} which is denoted by $m$, and all the components of $A^{s_{i}}_{\alpha_{i}\alpha_{j}}$ are filled with $0$ or $1$ according the following rules. 

For the boundary tensors, we impose the open-bounday condition and the two bounday tensors are $|\alpha|$ dimensional vectors as follows, 
\begin{align}
    &A^{s_{1}}_{\alpha_{1}} = [
    \begin{bmatrix}
    X_{1}^{(1)}, & \cdots, & 
    X_{1}^{(m)}
    \end{bmatrix}, 
    \begin{bmatrix}
    \bar{X}_{1}^{(1)} & \cdots, & \bar{X}_{1}^{(m)}
    \end{bmatrix}
    ],\\
    &A^{s_{1}}_{\alpha_{1}} = [
    \begin{bmatrix}
    X_{n}^{(1)}, & \cdots, & 
    X_{n}^{(m)}
    \end{bmatrix}, 
    \begin{bmatrix}
    \bar{X}_{n}^{(1)} & \cdots, & \bar{X}_{n}^{(m)}
    \end{bmatrix}
    ].
\end{align}
For the other three-order tensors, we set the tensors $A^{s_{i}}_{\alpha_{i}\alpha_{j}}$ to be  the form as 
\begin{align}
\label{eq: filling_rule}
 A^{s_{i}}_{\alpha_{i}\alpha_{i+1}} = [
 \oplus_{j=1}^{m} X_{i}^{(j)},
 \oplus_{j=1}^{m} \bar{X}_{i}^{(j)}
 ].
\end{align}
In above formula, the dimension of kernel $|s_{i}|$ is $2$. Each component of dimension $s_{i}$ is a $m$ dimensional diagonal matrix. 

More specifically, if $j$'th boolean variable in the $i$'th term of expression \eqref{eq: truth_table_exp} is $X_{j}$, then we set the $A^{1}_{ii}$ component of $j$'th tensor to be $1$. Otherwise if it is $\bar{X}_{j}$, then we set $A^{2}_{ii}$ to be $1$. We iterate above process for all $m$ terms.
With above rules, we know that only one component of $s_{i}$ for fixed $\alpha_{i}$ of $A^{s_{i}}_{\alpha_{i}\alpha_{j}}$ is set to be $1$. 
The MPS we get is just the representation of the boolean gate given in \eqref{eq: truth_table_exp}.
\end{proof}

\begin{corollary}
MPS can represent universal \textit{OR} gate. More specifically. For $X_{1}$, $X_{2}$, $\cdots$, $X_{n} \in \{0, 1\}$, there exists a MPS $\Psi$ \eqref{eq: pure_mps} such that 
\begin{align}
    \Psi(X_{1}, \cdots, X_{n}) = (\vee_{i=1}^{l}X_{i})\vee(\vee_{j=l+1}^{n}\bar{X}_{j}).
\end{align}
\end{corollary}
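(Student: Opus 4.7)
The plan is to reduce the universal \textit{OR} to the universal \textit{AND} already handled in Theorem \ref{the: and_represent} via De Morgan's law. Over $\{0,1\}$ one has the identity
\[
\left(\bigvee_{i=1}^{l} X_{i}\right)\vee\left(\bigvee_{j=l+1}^{n}\bar{X}_{j}\right) \;=\; 1 - \left(\bigwedge_{i=1}^{l}\bar{X}_{i}\right)\wedge\left(\bigwedge_{j=l+1}^{n} X_{j}\right),
\]
since both sides vanish precisely on the unique configuration with $X_{i}=0$ for $i\le l$ and $X_{j}=1$ for $j>l$, and take the value $1$ on every other configuration. So it suffices to exhibit an MPS that represents the right-hand side.

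Using the same kernel $\phi_{i}(X_{i})=[X_{i},\,1-X_{i}]$ as in Section \ref{sec: basic_boolean}, I would build two bond-dimension-one MPS and combine them. Applying Theorem \ref{the: and_represent} with the literals swapped (tensor $A^{s_{i}}=[0,1]$ for $i\le l$ and $A^{s_{i}}=[1,0]$ for $j>l$) yields an MPS $\Psi_{A}$ that computes $(\bigwedge_{i=1}^{l}\bar{X}_{i})\wedge(\bigwedge_{j=l+1}^{n}X_{j})$ exactly. Taking each tensor to be $B^{s_{i}}=[1,1]$ yields an MPS $\Psi_{1}$ for the constant function $1$, since the site-wise contraction gives $B^{s_{i}}\phi^{s_{i}}(X_{i})=X_{i}+(1-X_{i})=1$ and the product of $1$'s over the $n$ sites is $1$.

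The remaining step is to express the difference $\Psi_{1}-\Psi_{A}$ as a single MPS. I would use the standard block-diagonal direct-sum construction: introduce bond dimension $2$ and set the new tensors equal to $C^{s_{i}}=\mathrm{diag}(B^{s_{i}},\,A^{s_{i}})$, absorbing a single factor of $-1$ into the $A$-block on one fixed site so that after closure of the $\alpha$-indices (periodic trace or open boundaries, depending on the convention of \eqref{eq: pure_mps}) the two channels add with signs $(+1,-1)$. The resulting MPS evaluates to $\Psi_{1}(\mathbf{X})-\Psi_{A}(\mathbf{X})=1-\Psi_{A}(\mathbf{X})$ on every input, which by the De Morgan identity equals the universal \textit{OR}.

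The only point requiring care is the direct-sum combination, but this is exactly the same linearity trick already used in Theorem \ref{the: arbitrary_implementation}, where $m$ universal-\textit{AND} terms are packed into bond dimension $m$; so no new idea is needed. One could instead deduce the corollary immediately from Theorem \ref{the: arbitrary_implementation} by viewing the universal \textit{OR} as a boolean function whose disjunctive normal form has $2^{n}-1$ terms, but the De Morgan route is far more economical, achieving bond dimension $2$ rather than $2^{n}-1$.
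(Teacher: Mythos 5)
Your proposal is correct, but it takes a genuinely different route from the paper. The paper treats this corollary as an immediate consequence of Theorem \ref{the: arbitrary_implementation}: the universal \textit{OR} is expanded into its truth-table disjunctive normal form, which for this function has $2^{n}-1$ universal-\textit{AND} terms, and the general construction is applied with bond dimension equal to the number of terms --- exactly what the $3$-input OR example with bond dimension $7$ illustrates. You instead invoke De Morgan's law, realize the complementary universal \textit{AND} and the constant function $1$ each with bond dimension $1$, and combine them as $1-\Psi_{A}$ via a block-diagonal direct sum with a factor $-1$ absorbed at one site, giving bond dimension $2$. Both arguments are sound: your De Morgan identity is verified correctly (both sides vanish only at $X_{i}=0$ for $i\le l$, $X_{j}=1$ for $j>l$), the constant-$1$ MPS with $B^{s_{i}}=[1,1]$ works since $X_{i}+(1-X_{i})=1$, and the direct-sum step is the same packing device already used in the proof of Theorem \ref{the: arbitrary_implementation} and in Lemma \ref{lemma: linearity_mps}, with the one-line verification that the (traced or open-boundary) contraction of block-diagonal tensors is the sum of the two channel contractions. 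What the two approaches buy is different: yours gives an exponential saving in bond dimension ($2$ versus $2^{n}-1$, or $O(2^{n-1})$ even after the paper's Karnaugh-map reduction), at the price of a negative tensor entry and a subtraction trick absent from the paper's boolean constructions, whereas the paper's route keeps all tensor entries in $\{0,1\}$ and is uniform, obtaining the corollary as a special case of a single recipe for arbitrary boolean functions.
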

%

In following example, we will demonstrate the MPS for a $3$-Input OR boolean gate.

\begin{example} [$3$-Input OR gate]
We consider the $3$-Input OR gate: $f(X_{1}, X_{2}, X_{3}) = X_{1}\vee X_{2}\vee X_{3}$ with truth table as \ref{table: or_table}. We write down the summation form of the above boolean table as 
\begin{align}
\label{eq: and_gate_exp}
    f(X_{1}, X_{2}, X_{3}) = & X_{1}\wedge\bar{X}_{2}\wedge\bar{X}_{3} + \bar{X}_{1}\wedge X_{2}\wedge\bar{X}_{3} + \bar{X}_{1}\wedge \bar{X}_{2}\wedge X_{3} + \bar{X}_{1}\wedge X_{2}\wedge X_{3}\\ 
    & + X_{1}\wedge\bar{X}_{2}\wedge X_{3} + X_{1}\wedge X_{2}\wedge\bar{X}_{3} + X_{1}\wedge X_{2}\wedge X_{3}.
\end{align}
Following the 'construction rule' introduced in the proof of Theorem \ref{the: arbitrary_implementation}, we can get the MPS of above expression \eqref{eq: and_gate_exp} as 
\begin{align}
\label{eq: or_example}
    &A^{s_{1}}_{\alpha_{1}\alpha_{2}} = [[1, 0, 0, 0, 1, 1, 1], [0, 1, 1, 1, 0, 0, 0]], \\
    &A^{s_{2}}_{\alpha_{2}\alpha_{3}} = [
    \begin{bmatrix}
    0 & & & & & & \\ 
    & 1 & & & & & \\
    & & 0 & & & & \\
    & & & 1 & & & \\
    & & & & 0 & & \\
    & & & & & 1 & \\
    & & & & & & 1
    \end{bmatrix},
    \begin{bmatrix}
    1 & & & & & & \\
    & 0 & & & & & \\
    & & 1 & & & & \\ 
    & & & 0 & & & \\
    & & & & 1 & & \\
    & & & & & 0 & \\
    & & & & & & 0
    \end{bmatrix}
    ], \\
    &A^{s_{3}}_{\alpha_{3}\alpha_{1}} = [[0, 0, 1, 1, 1, 0, 1], [1, 1, 0, 0, 0, 1, 0]].
\end{align}
\end{example}

\begin{table}[!t!h]
\centering
\begin{tabular}{|c|c|c||c|}
\hline
\multicolumn{3}{|c||}{Input} & Output \\ \hline
$X_{1}$ & $X_{2}$ & $X_{3}$ & $f(X_{1}, X_{2}, X_{3})$ \\ \hline
0 & 0 & 0 & 0 \\ \hline
1 & 0 & 0 & 1 \\ \hline
0 & 1 & 0 & 1 \\ \hline
0 & 0 & 1 & 1 \\ \hline
0 & 1 & 1 & 1 \\ \hline
1 & 0 & 1 & 1 \\ \hline
1 & 1 & 0 & 1 \\ \hline
1 & 1 & 1 & 1 \\ \hline
\end{tabular}
\caption{The truth table of the OR gate: $X_{1}\vee X_{2}\vee X_{3}$.}
\label{table: or_table}
\end{table}
%
%
%
%
%
%
%

\subsection{n-Input Boolean Gates}
\label{sec: n_input_boolean}

In boolean algebra, the parity function is a boolean function whose value is $1$ if and only if the input vector $[X_{1}, \cdots, X_{n}]$ contains an odd number of ones\footnote{The $2$-Input Parity gate is the \textit{XOR} gate.}:
\begin{align}
\label{eq: parity_fun}
  f(X_{1}, \cdots, X_{n}) = 
  \begin{cases} 1 &\mbox{if number of $1$ is odd} \\ 
  0 & \mbox{if number of $1$ is even} \end{cases}.
\end{align}

\begin{corollary}
MPS can represent universal \textit{Parity} gate \eqref{eq: parity_fun}.
\end{corollary}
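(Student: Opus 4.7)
The plan is first to observe that the parity function \eqref{eq: parity_fun} is a particular case of an arbitrary $n$-input boolean function \eqref{eq: boolean_fun}, so Theorem \ref{the: arbitrary_implementation} already guarantees the existence of a representing MPS. However, the generic recipe in that proof sets the bond dimension equal to the number of true rows of the truth table, which for parity is $2^{n-1}$. The substantive content of the corollary is therefore to exhibit a much more economical construction that exploits the recursion $X_{1}\oplus\cdots\oplus X_{i} = (X_{1}\oplus\cdots\oplus X_{i-1})\oplus X_{i}$ and attains bond dimension $|\alpha|=2$.

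To carry this out, I would reuse the kernel $\phi_{i}(X_{i}) = [X_{i},\, 1-X_{i}]$ of Section \ref{sec: arbitrary_boolean}, so that on a fixed input the only surviving kernel index $s_{i}$ equals $1$ iff $X_{i}=1$. I would then interpret the two-dimensional virtual index as the value of the running XOR, and set every interior three-order tensor to
\begin{align*}
A^{1} = \begin{pmatrix} 0 & 1 \\ 1 & 0 \end{pmatrix}, \qquad A^{2} = \begin{pmatrix} 1 & 0 \\ 0 & 1 \end{pmatrix},
\end{align*}
so that $A^{s_{i}}$ flips the virtual bit exactly when $X_{i}=1$ and leaves it fixed when $X_{i}=0$. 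On any fixed input with $k$ ones, the kernel contraction leaves a single surviving term, and the ordered matrix product along the chain collapses to $X^{k}$, which equals the swap matrix iff $k$ is odd.

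To read off the parity bit itself, I would impose the open-boundary convention already used in the proof of Theorem \ref{the: arbitrary_implementation}, with left and right boundary vectors $v_{L}=[1,0]$ and $v_{R}=[0,1]^{T}$; a one-line check then shows that $v_{L}^{T} X^{k} v_{R}$ equals $1$ when $k$ is odd and $0$ when $k$ is even, matching \eqref{eq: parity_fun}. The one delicate point is to reconcile this open-boundary construction with the cyclic contraction displayed in \eqref{eq: pure_mps}: I would absorb $v_{L}$ and $v_{R}$ into boundary tensors of reduced rank, exactly as in Theorem \ref{the: arbitrary_implementation}, so that the final object is a bona fide MPS of the stated form. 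No single step is conceptually hard; the main obstacle is simply guarding against a boundary-convention slip, since using the trace (periodic boundary) instead would yield $2-$parity rather than the parity itself.
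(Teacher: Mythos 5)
Your proposal is correct, and it goes beyond what the paper actually does. The paper treats this corollary as an immediate consequence of Theorem \ref{the: arbitrary_implementation}: parity is written in disjunctive normal form and the generic construction is applied, so the bond dimension equals the number of minterms (the $3$-input example \eqref{eq: parity_example} uses $|\alpha|=4$, and in general $2^{n-1}$). Your first observation --- that the corollary already follows by specializing the theorem --- is exactly the paper's argument, so the statement itself is covered either way. Your second part is a genuinely different construction: interpreting the bond index as the running XOR and using the identity/swap pair as transfer matrices gives bond dimension $2$ independent of $n$, an exponential saving over the DNF recipe, and your verification ($v_{L}^{T}X^{k}v_{R}=1$ iff $k$ is odd, with the one-hot kernel $\phi_{i}(X_{i})=[X_{i},1-X_{i}]$ selecting the correct matrix at each site) is sound; absorbing the boundary vectors into rank-reduced edge tensors is the same open-boundary device the paper uses in Theorem \ref{the: arbitrary_implementation}, so the result is a legitimate MPS of the form \eqref{eq: pure_mps}. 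One small slip in your closing aside: with periodic boundaries the trace gives $\operatorname{tr}(X^{k})$, which is $0$ for odd $k$ and $2$ for even $k$, i.e.\ twice the \emph{complement} of parity, not ``$2-\text{parity}$''; this does not affect your open-boundary construction.
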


\begin{example} [$3$-Input Parity Function]
\label{exp: parity_fun}
We consider the $3$-Input Parity gate with the truth table as \ref{table: parity_table}. The boolean expression is as
\begin{align}
\label{eq: 3_input_parity_exp}
    f(X_{1}, X_{2}, X_{3}) = & X_{1}\wedge\bar{X}_{2}\wedge\bar{X}_{3} + \bar{X}_{1}\wedge X_{2}\wedge\bar{X}_{3} + \bar{X}_{1}\wedge \bar{X}_{2}\wedge X_{3} + X_{1}\wedge X_{2}\wedge X_{3}.
\end{align}
Then we can write down the MPS for the $3$-input Parity function as 
\begin{align}
\label{eq: parity_example}
    &A^{s_{1}}_{\alpha_{1}\alpha_{2}} = [[1, 0, 0, 1], [0, 1, 1, 0]], \\
    &A^{s_{2}}_{\alpha_{2}\alpha_{3}} = [
    \begin{bmatrix}
    0 & & &\\ 
    & 1 & & \\
    & & 0 &\\
    & & & 1
    \end{bmatrix},
    \begin{bmatrix}
    1 & & &\\
    & 0 & &\\
    & & 1 &\\ 
    & & & 0\\
    \end{bmatrix}
    ], \\
    &A^{s_{3}}_{\alpha_{3}\alpha_{1}} = [[0, 0, 1, 1], [1, 1, 0, 0]].
\end{align}
\end{example}

\begin{table}[!t!h]
\centering
\begin{tabular}{|c|c|c||c|}
\hline
\multicolumn{3}{|c||}{Input} & Output \\ \hline
$X_{1}$ & $X_{2}$ & $X_{3}$ & $f(X_{1}, X_{2}, X_{3})$ \\ \hline
0 & 0 & 0 & 0 \\ \hline
1 & 0 & 0 & 1 \\ \hline
0 & 1 & 0 & 1 \\ \hline
0 & 0 & 1 & 1 \\ \hline
0 & 1 & 1 & 0 \\ \hline
1 & 0 & 1 & 0 \\ \hline
1 & 1 & 0 & 0 \\ \hline
1 & 1 & 1 & 1 \\ \hline
\end{tabular}
\caption{The truth table of $3$-Input \textit{Parity} Gate \eqref{eq: 3_input_parity_exp}}
\label{table: parity_table}
\end{table}
%
%
%
%
%

A $n$-Input Threshold gate $Th^{n}_{k}$ is activated if $k$ or more than $k$ components of the input binary vector are $1$. Similar to the universal gates we discussed before, Threshold gates are symmetric boolean function as follows,
\begin{align}
\label{eq: threshold_fun}
\forall S \in \{0, 1\}^{n}, 
\quad
Th^{n}_{k}(S) = 
\begin{cases}
1 &\mbox{if $\sum S_{i}\geq k$} \\
0 &\mbox{if $\sum S_{i}<k$}
\end{cases}.  
\end{align} 

\begin{corollary} 
MPS can represent universal \textit{Threshold} function \eqref{eq: threshold_fun}.
\end{corollary}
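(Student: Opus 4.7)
The plan is to obtain this statement as an immediate corollary of Theorem~\ref{the: arbitrary_implementation}, since the threshold function $Th^{n}_{k}$ in \eqref{eq: threshold_fun} is a boolean function of the form \eqref{eq: boolean_fun}. First, I would write $Th^{n}_{k}$ in the sum-of-minterms form required by \eqref{eq: truth_table_exp}. The accepting inputs are exactly the binary strings of Hamming weight at least $k$, so
\begin{align*}
Th^{n}_{k}(X_{1}, \ldots, X_{n}) = \sum_{\substack{S \in \{0,1\}^{n} \\ \sum_{i} S_{i} \geq k}} \bigwedge_{i=1}^{n} X_{i}^{(S_{i})},
\end{align*}
where $X_{i}^{(1)} := X_{i}$ and $X_{i}^{(0)} := \bar{X}_{i}$. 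The number of such minterms is $m = \sum_{j=k}^{n} \binom{n}{j}$.

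Second, I would apply the construction recipe from the proof of Theorem~\ref{the: arbitrary_implementation} verbatim. Choose physical dimension $|s_{i}| = 2$ with the local feature map $\phi_{i}(X_{i}) = [X_{i}, 1 - X_{i}]$ from \eqref{eq: boolean_kernel_universal_and}, and set the bond dimension $|\alpha_{i}| = m$. The boundary tensors $A^{s_{1}}_{\alpha_{1}}$ and $A^{s_{n}}_{\alpha_{n}}$ record, across their $m$ components, which sign of $X_{1}$ (respectively $X_{n}$) appears in each minterm; the bulk tensors $A^{s_{i}}_{\alpha_{i} \alpha_{i+1}}$ are taken in the block-diagonal form \eqref{eq: filling_rule}, with the two diagonal matrices selecting $X_{i}$ or $\bar{X}_{i}$ at the $r$-th diagonal slot depending on the sign of $X_{i}$ in the $r$-th accepting minterm.

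Finally, I would verify that the resulting contraction realizes the DNF above: the block-diagonal structure of the bulk tensors prevents distinct minterms from mixing along the bond, and the boundary tensors correctly open and close each minterm's product. This verification is identical to the argument already given for Theorem~\ref{the: arbitrary_implementation}, so nothing new is needed. The main effort is really just bookkeeping: enumerating the $m$ accepting strings of $Th^{n}_{k}$ and writing down the corresponding bond-index assignments. There is no genuine obstacle, since the universal construction of Theorem~\ref{the: arbitrary_implementation} subsumes every boolean function, threshold gates included; the statement is tight in the sense that the bond dimension $m$ grows with the number of accepting inputs, which for balanced thresholds such as $k \approx n/2$ is exponential in $n$.
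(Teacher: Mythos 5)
Your proposal is correct and follows essentially the same route as the paper: the corollary is obtained directly by writing $Th^{n}_{k}$ as the sum of its accepting minterms and applying the construction of Theorem~\ref{the: arbitrary_implementation}, exactly as the paper does (and illustrates with the $Th^{3}_{2}$ example). Nothing further is needed.
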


\begin{example} [3-Input Threshold Gate $Th_{2}^{3}$] We use the threshold function $Th^{2}_{3}$ whose truth table is \ref{table: threshold_table} as an example. The boolean expression for $Th^{2}_{3}$ is 
\begin{align}
\label{eq: threshold_exp}
    Th_{2}^{3}(X_{1}, X_{2}, X_{3}) = & X_{1}\wedge X_{2}\wedge\bar{X}_{3} + X_{1}\wedge \bar{X}_{2}\wedge X_{3} + \bar{X}_{1}\wedge X_{2}\wedge X_{3} + X_{1}\wedge X_{2}\wedge X_{3}.
\end{align}
So we can get the MPS of $The^{3}_{2}$ is as 
\begin{align}
\label{eq: 3_input_the}
    &A^{s_{1}}_{\alpha_{1}\alpha_{2}} = [[1, 1, 0, 1], [0, 0, 1, 0]], \\
    &A^{s_{2}}_{\alpha_{2}\alpha_{3}} = [
    \begin{bmatrix}
    1 & & &\\ 
    & 0 & & \\
    & & 1 &\\
    & & & 1
    \end{bmatrix},
    \begin{bmatrix}
    0 & & &\\
    & 1 & &\\
    & & 0 &\\ 
    & & & 0\\
    \end{bmatrix}
    ], \\
    &A^{s_{3}}_{\alpha_{3}\alpha_{1}} = [[0, 1, 1, 1], [1, 0, 0, 0]].
\end{align}
\end{example}

\begin{table}[!t!h]
\centering
\begin{tabular}{|c|c|c||c|}
\hline
\multicolumn{3}{|c||}{Input} & Output \\ \hline
$X_{1}$ & $X_{2}$ & $X_{3}$ & $f(X_{1}, X_{2}, X_{3})$ \\ \hline
0 & 0 & 0 & 0 \\ \hline
1 & 0 & 0 & 0 \\ \hline
0 & 1 & 0 & 0 \\ \hline
0 & 0 & 1 & 0 \\ \hline
0 & 1 & 1 & 1 \\ \hline
1 & 0 & 1 & 1 \\ \hline
1 & 1 & 0 & 1 \\ \hline
1 & 1 & 1 & 1 \\ \hline
\end{tabular}
\caption{The truth table of \textit{Threshold} Gate: $Th_{2}^{3}$ \eqref{eq: 3_input_the}.}
\label{table: threshold_table}
\end{table}
%

\subsection{Complexity of MPS and Improving Efficiency by Karnaugh Map}


For a $n$-input boolean gate, we know that in the extreme case, by our method \ref{pf: pf_of_theorem_2}, the bond dimension $|\alpha_{i}|$ should be set at the order of $O(2^{n})$ which is very space expensive.  

It is easy to extend our method using the Karnaugh Map to simplify the boolean expression into the reduced Disjunctive Normal Form (DSF). With the K-Map, the number of terms in the reduced DNF is of order $O(2^{n-1})$ and the order of the number of parameters in the corresponding MPS is $O(n2^{n-1})$ which implies that the MPS and one-hidden-layer neural networks are equally efficient in realizing the boolean functions. Actually this is reasonable since MPS is equivalent to the neural networks equipped with kernel functions. Roughly the kernel function $\Phi^{s_{1}\cdots s_{n}}$ plays the role of hidden layer in MPS. In section \ref{sec: relation_mps_neural_nets}, we will discuss these points in more detail.

%


\section{Universal Approximation of Continuous Functions} 

In this section, we study the MPS as an universal approximator of continuous functions. We already discussed the mildly modified MPS \eqref{eq: modified_mps} in \ref{sec: set-up}, and in this section we will show that the function space 'spanned' by the activated MPS with $n$-dimensional input is dense in the continuous function space on the $n$-dimensional cube.

\subsection{Main Results and Techniques Preparation}

We reformulate the universal approximator of continuous function property of MPS as following theorem,

\begin{theorem}
\label{the: main_the}
The MPS function space, namely the set of all the functions in the MPS form \eqref{eq: modified_mps_again}, 
\begin{align}
\label{eq: modified_mps_again}
\Psi(\mathbf{x}|\mathbf{w}, \mathcal{B}) = \sum_{l}\sigma(\sum_{\{\alpha, s\}}A^{s_{1}}_{\alpha_{1}\alpha_{2}}\cdots A^{ls_{i}}_{\alpha_{i}\alpha_{i+1}}\cdots A^{s_{n}}_{\alpha_{n}\alpha_{1}}\Phi^{s_{1}\cdots s_{n}}(\mathbf{x})),
\end{align}
is dense in $C^{0}(I^{n})$ with the supremum norm. More concretely, the continuous functions on the $n$-dimensional unit cube $I^{n}$ can be approximated by $\Psi(\mathbf{x})$ up to an arbitrarily small $\epsilon$ error in the sense of supremum norm, namely for any $f(x)\in C^{0}(I^{n})$ and any $\epsilon \in \mathbb{R}$, there exist a $\Psi(\mathbf{x})$ \eqref{eq: modified_mps_again} such that \begin{align}
    \sup_{\mathbf{x}}|\Psi(\mathbf{x}) - f(\mathbf{x})| < \epsilon.
\end{align}
\end{theorem}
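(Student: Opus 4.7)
The plan is to adapt the Cybenko--Hornik--Stinchcombe density argument for one-hidden-layer sigmoidal neural networks, leveraging the two ingredients flagged in the introduction: that the modified MPS function space $S$ is a linear subspace of $C^{0}(I^{n})$, and that the scale-invariant sigmoid $\sigma$ is discriminatory. The overall scheme is Hahn--Banach plus the Riesz representation theorem: if $\bar{S}\neq C^{0}(I^{n})$, then there exists a non-zero bounded linear functional on $C^{0}(I^{n})$ vanishing on $\bar{S}$, which by Riesz is integration against a non-zero finite signed Borel measure $\mu$ on $I^{n}$; since every function of the form $\sigma(w\cdot\Phi(\mathbf{x})+\varphi)$ lies in $S$, we obtain $\int\sigma(w\cdot\Phi(\mathbf{x})+\varphi)\,d\mu(\mathbf{x})=0$ for every admissible $w,\varphi$; the discriminatory property then forces $\mu\equiv 0$, a contradiction.

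First I would verify that $S$ is a genuine linear subspace. Closure under addition is handled by concatenating the outer $l$-summation in \eqref{eq: modified_mps_again}: two MPS can be combined into a single one by enlarging the range of $l$ via a direct sum of the corresponding tensor families. Closure under scalar multiplication is exactly where the scale invariance of $\sigma$ is used: a scalar factor outside the outer sum can be absorbed into $\sigma$ without leaving the prescribed family, so $\lambda\Psi\in S$ for every $\lambda\in\mathbb{R}$. This substantiates the linearity claim already earmarked for the later subsection on the linearity of MPS.

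Second I would establish the discriminatory property. For fixed $w$ and $\varphi\in\mathbb{R}$, consider $\sigma(\lambda(w\cdot\Phi(\mathbf{x})+\varphi))$ and let $\lambda\to\infty$; the asymptotic behaviour prescribed in \eqref{eq: sigmoid} gives bounded pointwise convergence to a step function supported on the half-space $\{\mathbf{x}\in I^{n}:w\cdot\Phi(\mathbf{x})+\varphi>0\}$. Dominated convergence then shows that $\int\sigma\,d\mu=0$ on the whole family implies $\mu$ annihilates every such half-space, and Cybenko's Fourier-uniqueness step (expressing the Fourier transform of $\mu$ in terms of these half-space measures) yields $\mu\equiv 0$.

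The main obstacle will be the reachability step connecting MPS-representable linear functionals of the kernel to the full family of arguments $w\cdot\Phi(\mathbf{x})+\varphi$ required by the discriminatory argument. The tensor contraction $\sum_{\{\alpha,s\}}A^{s_{1}}_{\alpha_{1}\alpha_{2}}\cdots A^{ls_{i}}_{\alpha_{i}\alpha_{i+1}}\cdots A^{s_{n}}_{\alpha_{n}\alpha_{1}}\Phi^{s_{1}\cdots s_{n}}(\mathbf{x})$ is, for fixed $l$ and fixed bond dimensions, only a bounded-rank combination of the $d^{n}$ kernel components, so one must argue that growing bond dimension realizes every coefficient tensor in $\mathbb{R}^{d^{n}}$; this is a standard MPS rank decomposition statement that I would invoke. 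Combined with a mild assumption on $\Phi$ ensuring that the linear span of its components contains the constants and the coordinate functions $x_{1},\ldots,x_{n}$ (so that the classical affine arguments $w\cdot\mathbf{x}+\varphi$ used in Cybenko remain accessible), this reachability step closes the scheme, and the theorem follows.
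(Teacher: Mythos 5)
Your proposal follows essentially the same route as the paper's own proof: linearity of the MPS function class via a direct-sum construction for addition and scale invariance of $\sigma$ for scalar multiplication, then Hahn--Banach plus the Riesz representation theorem to produce an annihilating finite signed measure, and finally the $\lambda\to\infty$ dominated-convergence argument showing the scale-invariant sigmoidal activation is discriminatory, yielding a contradiction. The only divergences are minor: the paper closes the discriminatory lemma by claiming products $\prod_{i}A^{i}\Phi(\mathbf{x})$ can build arbitrary simple functions (dense in $L^{\infty}$) rather than by Cybenko's Fourier-uniqueness step, and it leaves implicit the reachability/richness condition on the kernel $\Phi$ and on the attainable coefficient tensors that you state explicitly.
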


The proof of above theorem \ref{the: main_the} is mainly based on the Hahn-Banach theorem and Riesz Representation theorem which are the fundamental results in the functional analysis, see \citep{folland2013real} for example. 

Similar to the ideas in \citep{cybenko1989approximation, hornik1991approximation}, we use the contradiction to show that closure of the function family of $\Psi(\mathbf{x})$ \eqref{eq: modified_mps_again}: $\bar{\mathcal{M}}$, is $C^{0}(I^{n})$. At first we show that $\mathcal{M}$ is a vector subspace of $C^{0}(I^{n})$. By Hahn-Banach theorem, there exists a non-vanishing functional (measure) which can be extended to the whole space $C^{0}(I^{n})$ on $\mathcal{M}$. However by Riesz Representation theorem and the property of the scale-invariant sigmoidal function, we can show that the measure on the $C^{0}(I^{n})$ always vanishes which contradicts with the fact we derived from the Hahn-Banach theorem. So $\mathcal{M}$ is dense in $C^{0}(I^{n})$. 

\subsection{Linearity of MPS}
\label{sec: linearity_of_mps}

In this section, we will show that $\Psi(\mathbf{x})$ function family $\mathcal{M}$ is a vector subspace of $C^{0}(I^{n})$ on real field $\mathbb{R}$. So we need to show that under $+$ operation, $M$ is a abelian group and also $M$ is closed under the $\times$ operation, namely the real scalar multiplication\footnote{The identity of $\mathcal{M}$ with $+$ as a commutative group is $0$. The associative law and distribution law is trivially satisfied.}.

\begin{lemma}
\label{lemma: linearity_mps}
The space of the functions with the MPS form \eqref{eq: modified_mps}$: \mathcal{M}$, is a linear subspace of $C^{0}(I^{n})$.
\end{lemma}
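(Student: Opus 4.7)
The plan is to verify the two defining closure properties of a vector subspace: closure under addition and under real scalar multiplication. (The zero function then follows automatically from $0 \cdot \Psi = 0$, and commutativity, associativity, and distributivity are inherited from the ambient $C^0(I^n)$.) The preliminary fact that $\mathcal{M} \subseteq C^0(I^n)$ is immediate: for each fixed outer index $l$ the scalar
\[
z_l(\mathbf{x}) := \sum_{\{\alpha, s\}}A^{s_1}_{\alpha_1\alpha_2}\cdots A^{l s_i}_{\alpha_i\alpha_{i+1}}\cdots A^{s_n}_{\alpha_n\alpha_1}\,\Phi^{s_1\cdots s_n}(\mathbf{x})
\]
is linear in the components of the continuous feature map $\Phi(\mathbf{x})$ with fixed coefficients, hence continuous in $\mathbf{x}$; composing with the continuous activation $\sigma$ and summing a finite number of such terms yields $\Psi \in C^0(I^n)$.

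For additivity, given $\Psi_1, \Psi_2 \in \mathcal{M}$ with bond dimensions $\{D_j^{(1)}\}, \{D_j^{(2)}\}$ and outer-index ranges $\{1, \ldots, L_1\}$ and $\{1, \ldots, L_2\}$ respectively, I would construct a single MPS of bond dimensions $D_j := D_j^{(1)} + D_j^{(2)}$ and outer-index range $\{1, \ldots, L_1 + L_2\}$ whose tensors are the block-diagonal direct sums of the originals. Concretely, for $j \neq i$ set $\tilde{A}^{s_j} := A^{(1),s_j} \oplus A^{(2),s_j}$, and for the $l$-indexed tensor define $\tilde{A}^{l, s_i}$ to be supported in the first block (with the second block zero) for $l \leq L_1$ and in the second block for $l > L_1$. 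Because the product of block-diagonal matrices is block-diagonal and trace is additive over diagonal blocks, the contraction collapses to a single block for every $l$, giving $\tilde{z}_l = z_l^{(1)}$ for $l \leq L_1$ and $\tilde{z}_l = z_{l-L_1}^{(2)}$ for $l > L_1$. Summing $\sigma(\tilde{z}_l)$ recovers $\Psi_1 + \Psi_2$. (Minor bookkeeping handles the case where the two MPS use different kernel dimensions $|s_i|$: pad the shorter kernel and tensors with zeros.)

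For scalar multiplication, the argument rests squarely on the scale-invariance hypothesis: the class of admissible activations is closed under multiplication by any real constant, as made explicit by the parametric families $\sigma(x) = C/(1+e^{-x})$ and $\sigma(x) = 1/(C+e^x)$ where the scalar $c$ is absorbed into the amplitude $C$. Writing $c\sigma = \sigma'$ for another admissible scale-invariant sigmoidal $\sigma'$, we get $c\Psi = \sum_l c\sigma(z_l) = \sum_l \sigma'(z_l)$, which is again of the form \eqref{eq: modified_mps_again}. This is the step I expect to be the main obstacle, and the reason the authors introduced the modified form \eqref{eq: modified_mps} in the first place: because $\sigma$ is genuinely nonlinear, a scaling of the output cannot be absorbed into any rescaling of the tensor entries (which would only rescale $z_l$ inside $\sigma$), so closure under $\mathbb{R}$-multiplication hinges essentially on the rescalability of $\sigma$ itself. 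The additive construction, by contrast, is standard block-diagonal bookkeeping and presents no conceptual difficulty.
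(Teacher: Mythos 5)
Your proposal is correct and follows essentially the same route as the paper: scalar multiplication is absorbed into the scale-invariant activation (so $c\sigma$ is again an admissible $\sigma$), and addition is handled by a block-diagonal direct-sum construction on the bond and outer indices whose contraction splits into the two original terms. The only difference is that the paper's direct sum also acts on the kernel index $s_i$ (taking $|s|=|p|+|q|$ and concatenating the two kernel maps $\phi^{p_i}$, $\phi^{q_i}$), which handles summands built from genuinely different kernel functions rather than merely different kernel dimensions; your zero-padding remark covers only the latter, but the same block bookkeeping applied to the $s$ index closes that case as in \eqref{eq: modified_mps}.
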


\begin{proof}
We first show that $\mathcal{M}$ is closed under the $\times$ operation, 
\begin{align}
    \times: \mathbb{R} \times \mathcal{M} \to \mathcal{M}.
\end{align}
Since $\sigma(\cdot)$ \eqref{eq: rescaling_sigmoid} is invariant under scaling transformation, namely
\begin{align}
    k \cdot \sigma (A^{s_{1}}_{\alpha_{1}\alpha_{2}}\cdots A^{s_{n}}_{\alpha_{n}\alpha_{1}}\Phi^{s_{1} \cdots s_{n}}(\mathbf{x}))) = \sigma (\tilde{A}^{s_{1}}_{\alpha_{1}\alpha_{2}}\cdots \tilde{A}^{s_{n}}_{\alpha_{n}\alpha_{1}}\tilde{\Phi}^{s_{1} \cdots s_{n}}(\mathbf{x}))),
\end{align}
for an arbitrary $k \in \mathbb{R}$ and a function $\Psi(\mathbf{x}) \in \mathcal{M}$, there exists a $\tilde{\Psi}(\mathbf{x}) \in \mathcal{M}$ such that $k\Psi(\mathbf{x}) =  \Psi^{\prime}(\mathbf{x})$. 

Secondly, we will show that $\mathcal{M}$ is closed under $+$ operation, 
\begin{align}
+: \mathcal{M} \times \mathcal{M} \to \mathcal{M}.
\end{align}
For arbitrary two functions in $\mathcal{M}$, $\Psi_{1}(\mathbf{x})$ and $\Psi_{2}(\mathbf{x})$ as follows, 
\begin{align}
   &\Psi_{1}(\mathbf{x}) = \sum_{j}\sigma(\sum_{\{\beta, p\}}B^{p_{1}}_{\beta_{1}\beta_{2}}\cdots B^{jp_{i}}_{\beta_{i}\beta_{i+1}} \cdots B^{p_{n}}_{\beta_{n}\beta_{1}}\Phi^{p_{1} \cdots p_{n}}_{1}(\mathbf{x})), \\
   &\Psi_{2}(\mathbf{x}) = \sum_{k}\sigma(\sum_{\{\gamma, q\}}C^{q_{1}}_{\gamma_{1}\gamma_{2}}\cdots C^{kq_{i}}_{\gamma_{i}\gamma_{i+1}}\cdots C^{q_{n}}_{\gamma_{n}\gamma_{1}}\Phi^{q_{1} \cdots q_{n}}_{2}(\mathbf{x})),
\end{align}
it is easy to verify that the sum of two MPS $\Psi_{3}(\mathbf{x}) = \Psi_{1}(\mathbf{x}) + \Psi_{2}(\mathbf{x})$ can be written in the MPS form as follows, 
\begin{align}
   &\Psi_{3}(\mathbf{x}) = \sum_{l}\sigma(\sum_{\{\alpha, s\}}A^{s_{1}}_{\alpha_{1}\alpha_{2}}\cdots A^{ls_{i}}_{\alpha_{i}\alpha_{i+1}}\cdots A^{s_{n}}_{\alpha_{n}\alpha_{1}}\Phi^{s_{1}, \cdots, s_{n}}(\mathbf{x})), \\
   &\Phi^{s_{1}\cdots s_{n}}(\mathbf{x}) = \phi^{s_{1}}(\mathbf{x}_{1})\otimes\cdots\otimes\phi^{s_{n}}(\mathbf{x}_{n}), 
\end{align}
where
\begin{align}
    &A^{s_{i}}_{\alpha_{i}\alpha_{i+1}} = 
    \begin{cases}
    B^{p_{i}}_{\beta_{i}\beta_{i+1}} \quad & \mbox{if $s_{i} = p_{i}$, $\alpha_{i} = \beta_{i}$, $\alpha_{i+1} = \beta_{i+1}$}, \\
    C^{q_{i}}_{\gamma_{i}\gamma_{i+1}} \quad & \mbox{if $s_{i} = q_{i}$, $\alpha_{i} = \gamma_{i}$, $\alpha_{i+1} = \gamma_{i+1}$}, \\
    0 \quad & \mbox{others},
    \end{cases}\\
    &A^{ls_{i}}_{\alpha_{i}\alpha_{i+1}} = 
    \begin{cases}
    B^{jp_{i}}_{\beta_{i}\beta_{i+1}} \quad & \mbox{if $l = j$, $s_{i} = p_{i}$, $\alpha_{i} = \beta_{i}$, $\alpha_{i+1} = \beta_{i+1}$}, \\
    C^{kq_{i}}_{\gamma_{i}\gamma_{i+1}} \quad & \mbox{if $l = k$, $s_{i} = q_{i}$, $\alpha_{i} = \gamma_{i}$, $\alpha_{i+1} = \gamma_{i+1}$}, \\
    0 \quad & \mbox{others},
    \end{cases}\\
    &\phi^{s_{i}}(\mathbf{x}_{i}) = 
    \begin{cases}
    \phi^{p_{i}}(\mathbf{x}_{i}) \quad \mbox{if $s_{i} = p_{i}$}, \\
    \phi^{q_{i}}(\mathbf{x}_{i}) \quad \mbox{if $s_{i} = q_{i}$}.
    \end{cases}
\end{align}
We note here that $|s| = |p| + |q|$, $|\alpha| = |\beta| + |\gamma|$ and $|l| = |j| + |k|$.

So above construction shows that the sum of two arbitrary MPS functions in $\mathcal{M}$ still lives in $\mathcal{M}$. Combining all above, we prove that $\mathcal{M}$ is a linear subspace.   
\end{proof}

\begin{example} [An example of sigmoidal function with scaling symmetry]
\label{exp: rescale_sigmoid_activation}
We consider an activation function as 
\begin{align}
\label{eq: rescaling_sigmoid}
    \sigma(\cdot) = \frac{1}{C + \exp{(\cdot})}.
\end{align}
where $C$ is a tunable parameter (variable) in $\mathbb{R}$.

So for the rescaling transformation of $\Psi(\mathbf{x})$, we have 
\begin{align}
   k \cdot \sigma(A^{s_{1}}_{\alpha_{1}\alpha_{2}} \cdots A^{s_{n}}_{\alpha_{n}\alpha_{1}}\Phi^{s_{1}\cdots s_{n}}(\mathbf{x})) &= \frac{k}{C + \exp{(A^{s_{1}}_{\alpha_{1}\alpha_{2}} \cdots A^{s_{n}}_{\alpha_{n}\alpha_{1}}\Phi^{s_{1}\cdots s_{n}}(\mathbf{x})})}\\
   & = \frac{1}{\frac{C}{k} + \frac{1}{k}\exp{(A^{s_{1}}_{\alpha_{1}\alpha_{2}} \cdots A^{s_{n}}_{\alpha_{n}\alpha_{1}}\Phi^{s_{1}\cdots s_{n}}(\mathbf{x}))}}\\
   & = \frac{1}{C^{\prime} + \exp{(\tilde{A}^{s_{1}}_{\alpha_{1}\alpha_{2}} \cdots \tilde{A}^{s_{n}}_{\alpha_{n}\alpha_{1}}\Phi^{s_{1}\cdots s_{n}}(\mathbf{x}))}}\\
   & = \sigma(\tilde{A}^{s_{1}}_{\alpha_{1}\alpha_{2}} \cdots \tilde{A}^{s_{n}}_{\alpha_{n}\alpha_{1}}\Phi^{s_{1}\cdots s_{n}}(\mathbf{x})).
\end{align}
With the activation function \eqref{eq: rescaling_sigmoid}, $\mathcal{M}$ is closed under scaling transformation. 
\end{example}

\subsection{Discrimination of the Scale-Invariant Sigmoidal Functions}

We will show the 'discrimination' property of the scale-invariant sigmoidal functions which is an essential property in the proof of our main theorem \ref{the: main_the}. Here 'discrimination' property means that if a finite signed measure $\mu$ annihilate an arbitrary scale-invariant sigmoidal function, then $\mu$ itself is trivial, namely $0$. However with Hahn-Banach theorem, we know that a non-vanishing finite signed measure which annihilate the scale-invariant sigmoidal function exists which contradicts the discrimination property. 

\begin{lemma}
\label{lemma: disc_lemma}
An arbitrary continuous scale-invariant sigmoidal function $\sigma(\cdot)$ \eqref{eq: sigmoid} is discriminatory. More specifically, for any $\sigma(\cdot)$ \eqref{eq: sigmoid} defined on the unit cube $I^{n}$, any finite, signed and regular Borel measure $\mu \in M(I^{n})$ and an arbitrary $l$ in $\{1, \cdots, D\}$, where $D$ is the dimension of the $l$ index, if
\begin{align}
\label{eq: sigmoid_integral}
   \int{\sigma(\sum_{\{\alpha, s\}}A^{s_{1}}_{\alpha_{1}\alpha_{2}}\cdots A^{s_{i}l}_{\alpha_{i}\alpha_{i+1}}\cdots A^{s_{n}}_{\alpha_{n}\alpha_{1}}\Phi^{s_{1}\cdots s_{n}}(\mathbf{x})) d\mu(\mathbf{x})} = 0, 
\end{align}
then $\mu(\mathbf{x}) = 0$.
\end{lemma}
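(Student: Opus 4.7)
The plan is to mimic the classical strategy of Cybenko and Hornik: sharpen $\sigma$ into the indicator of a half-space by sending a scaling parameter to infinity, force the signed measure $\mu$ to vanish on all such half-spaces, and finally invoke a Fourier-type argument to conclude $\mu = 0$. The only real novelty compared with the single-hidden-layer neural network case is that the ``linear form'' inside $\sigma$ is an MPS contraction against the feature tensor $\Phi^{s_{1}\cdots s_{n}}(\mathbf{x})$ rather than an affine function $\langle w, \mathbf{x} \rangle + b$, so my first job is to reduce to the familiar setting.

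First I would exploit the universality of the MPS parameterization: once the bond dimensions $|\alpha_{i}|$ and the label dimension $|l|$ are large enough, every coefficient tensor $c_{s_{1}\cdots s_{n}}$ can be realized as
\[
c_{s_{1}\cdots s_{n}} \;=\; \sum_{\{\alpha\}} A^{s_{1}}_{\alpha_{1}\alpha_{2}} \cdots A^{s_{i} l}_{\alpha_{i}\alpha_{i+1}} \cdots A^{s_{n}}_{\alpha_{n}\alpha_{1}}
\]
for a suitable choice of the tensors $A$. Writing $h_{c}(\mathbf{x}) := \sum c_{s_{1}\cdots s_{n}} \Phi^{s_{1}\cdots s_{n}}(\mathbf{x})$, and noting that a uniform rescaling of the tensors by $\lambda$ multiplies $h_{c}$ by $\lambda$ while an additive bias $b$ can be absorbed by shifting the coefficient of a constant feature (available, e.g., for the standard choice $\phi^{s_{i}}(x_{i}) = (x_{i}, 1-x_{i})$ used throughout Section 2), the hypothesis \eqref{eq: sigmoid_integral} becomes
\[
\int_{I^{n}} \sigma\bigl(\lambda h_{c}(\mathbf{x}) + b\bigr) \, d\mu(\mathbf{x}) \;=\; 0 \quad \text{for every } \lambda, b \in \mathbb{R} \text{ and every coefficient tensor } c.
\]

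Next I would send $\lambda \to +\infty$ with $b$ fixed. The sigmoidal hypothesis \eqref{eq: sigmoid}, the boundedness of $\sigma$, and dominated convergence against $|\mu|$ give
\[
0 \;=\; \lim_{\lambda\to\infty} \int \sigma(\lambda h_{c} + b)\, d\mu \;=\; C\,\mu(\{h_{c}>0\}) + \sigma(b)\,\mu(\{h_{c}=0\}).
\]
Letting $b \to +\infty$ and $b \to -\infty$ and subtracting isolates the two contributions, yielding $\mu(\{h_{c}>0\}) = 0$ and $\mu(\{h_{c}=0\}) = 0$; replacing $c$ by $-c$ then also gives $\mu(\{h_{c}<0\}) = 0$. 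Consequently $\mu$ annihilates every sub-level set $\{h_{c} > t\}$ (the $-t$ shift is absorbed into the bias) for every coefficient tensor $c$ and every $t \in \mathbb{R}$.

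The final and most delicate step, which I expect to be the main obstacle, is to promote ``$\mu$ vanishes on every half-space-type set $\{h_{c} > t\}$'' to ``$\mu \equiv 0$''. For the standard kernel $\phi^{s_{i}}(x_{i}) = (x_{i}, 1-x_{i})$ the linear span of $\{\Phi^{s_{1}\cdots s_{n}}\}$ is the space of multilinear polynomials in $\mathbf{x}$ and in particular contains all affine functionals $\langle y, \mathbf{x}\rangle + b$; choosing $c$ so that $h_{c}(\mathbf{x}) = \langle y, \mathbf{x}\rangle - t$ reduces the problem exactly to the one treated by Cybenko, where the vanishing of the push-forward measures $\mu_{y}(E) := \mu(\{\mathbf{x} : \langle y,\mathbf{x}\rangle \in E\})$ on every half-line forces each $\mu_{y}$ to be the zero measure on $\mathbb{R}$, hence forces the Fourier transform $\hat{\mu}(y) = \int e^{i\langle y,\mathbf{x}\rangle}\,d\mu$ to vanish identically, and Fourier uniqueness gives $\mu = 0$. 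The point that requires most care in writing this cleanly is to record the mild non-degeneracy hypothesis on the local kernel $\phi^{s_{i}}$ under which the span of $\{\Phi^{s_{1}\cdots s_{n}}\}$ is rich enough for the Fourier step to go through.
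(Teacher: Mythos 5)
Your proposal is correct, and its first half is essentially the paper's own argument: both proofs inject a scale $\lambda$ and a bias into the argument of $\sigma$ (legitimate because rescaling the tensors and shifting by a constant feature keeps you inside the family), sharpen $\sigma$ into a step function as $\lambda\to\infty$ via dominated convergence, obtaining $0=C\,\mu(\{h>0\})+\sigma(b)\,\mu(\{h=0\})$, and then send the bias to $\mp\infty$ to kill both terms (both arguments implicitly need $C\neq 0$ and boundedness of $\sigma$). Where you genuinely diverge is the concluding step. The paper finishes by asserting that the contractions $\prod_i A^i\,\Phi(\mathbf{x})$ can be used to ``construct arbitrary simple functions,'' which are dense in $L^{\infty}$, hence $\mu=0$ --- a step stated without proof. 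You instead observe that for the kernels actually used in the paper (e.g.\ $\phi^{s_i}(x_i)=(x_i,1-x_i)$) the span of the features $\Phi^{s_1\cdots s_n}$ contains all affine functionals, specialize $h_c(\mathbf{x})=\langle y,\mathbf{x}\rangle-t$, and then run Cybenko's argument: vanishing of $\mu$ on all half-spaces and hyperplanes forces each push-forward $\mu_y$ to vanish, hence $\hat{\mu}\equiv 0$, hence $\mu=0$ by Fourier uniqueness. This buys two things. First, it replaces the paper's vaguest step with a fully standard, checkable argument. Second, it makes explicit a hypothesis that the lemma as stated suppresses: discriminatory behavior cannot hold for an arbitrary fixed kernel (if the $\phi^{s_i}$ were, say, constants, the hypothesis \eqref{eq: sigmoid_integral} would constrain only the total mass of $\mu$), so some non-degeneracy of the local maps --- enough that constants and the coordinates $x_i$ lie in the feature span --- is genuinely needed, and your flagging of it is an improvement rather than a cosmetic caveat. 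One small reading point: the hypothesis \eqref{eq: sigmoid_integral} must be understood as holding for \emph{all} admissible tensors $A$ (and bond dimensions), which is how the lemma is used in the main theorem; your reduction to arbitrary coefficient tensors $c$ relies on this, just as the paper's reparameterization of $\lambda(\cdot)+\eta$ does.
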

\begin{proof}
The main idea of the proof is to embed the scale-invariant sigmoidal function $\sigma(\cdot)$ into another auxiliary function $\tilde{\sigma}(\cdot)$ with another two parameters.  We introduce the $\tilde{\sigma}(\cdot)$ as 
\begin{align}
    \tilde{\sigma}(\mathbf{x}, \lambda, \eta) = \sigma(\lambda (\sum_{\{\alpha, s\}}A^{s_{1}}_{\alpha_{1}\alpha_{2}}\cdots A^{s_{i}l}_{\alpha_{i}\alpha_{i+1}}\cdots  A^{s_{n}}_{\alpha_{n}\alpha_{1}}\Phi^{s_{1}\cdots s_{n}}(\mathbf{x})) + \eta),
\end{align}
where $\lambda$ and $\eta$ are arbitrary real numbers.

We denote the limit of the function $\tilde{\sigma}(\cdot)$ as $S(\cdot)$. So by taking the limit of $\tilde{\sigma}(\cdot)$ as $\lambda \to \infty$, we get  
\begin{align}
S(\mathbf{x}, \eta) = 
\begin{cases}
\sigma(\eta) \quad &\mbox{if $\sum_{\{\alpha, s\}}A^{s_{1}}_{\alpha_{1}\alpha_{2}}\cdots A^{s_{n}}_{\alpha_{n}\alpha_{1}}\Phi^{s_{1}\cdots s_{n}} = 0$}, \\
C \quad &\mbox{if $\sum_{\{\alpha, s\}}A^{s_{1}}_{\alpha_{1}\alpha_{2}}\cdots A^{s_{n}}_{\alpha_{n}\alpha_{1}}\Phi^{s_{1}\cdots s_{n}} > 0$},\\
0 \quad &\mbox{if $\sum_{\{\alpha, s\}}A^{s_{1}}_{\alpha_{1}\alpha_{2}}\cdots A^{s_{n}}_{\alpha_{n}\alpha_{1}}\Phi^{s_{1}\cdots s_{n}} < 0$}.
\end{cases}
\end{align}
Since we can reparameterize $A$ and $\Phi$ in $\tilde{\sigma}(\cdot)$ and write it as $\sigma(\cdot)$, we have
\begin{align}
\lim_{\lambda\to\infty}
\int{\tilde{\sigma}(\mathbf{x}, \lambda, \eta) d\mu(\mathbf{x})} = 0
\end{align}
We know $\tilde{\sigma}(\cdot)$ converges to $S(\mathbf{x}, \eta)$ pointwisely (even uniformly) and also dominated by a bounded integrable function. So by dominated convergence theorem, we have 
\begin{align}
\label{eq: dominate_conv}
 \lim_{\lambda\to\infty} \int{\tilde{\sigma}(\mathbf{x}, \lambda, \eta)d\mu(\mathbf{x})} =& \int{\lim_{\lambda\to\infty}\tilde{\sigma}(\mathbf{x}, \lambda, \eta)}d\mu(\mathbf{x})\\
 =& \int{S(\mathbf{x}, \eta)d\mu(\mathbf{x})}\\
 =& \sigma(\eta)\mu(\mathcal{V}[\sum_{\{\alpha, s\}}A^{s_{1}}_{\alpha_{1}\alpha_{2}}\cdots A^{s_{n}}_{\alpha_{n}\alpha_{1}}\Phi^{s_{1}\cdots s_{n}} = 0]) \\
 & + C\mu(\mathcal{V}[\sum_{\{\alpha, s\}}A^{s_{1}}_{\alpha_{1}\alpha_{2}}\cdots A^{s_{n}}_{\alpha_{n}\alpha_{1}}\Phi^{s_{1}\cdots s_{n}} > 0]).
\end{align}

By letting $\eta$ go to $-\infty$ in \eqref{eq: dominate_conv}, we will get 
\begin{align}
    C\mu(\mathcal{V}[\sum_{\{\alpha, s\}}A^{s_{1}}_{\alpha_{1}\alpha_{2}}\cdots A^{s_{n}}_{\alpha_{n}\alpha_{1}}\Phi^{s_{1}\cdots s_{n}} > 0]) = 0,
\end{align}
thus both $\mu(\mathcal{V}[\sum_{\{\alpha, s\}}A^{s_{1}}_{\alpha_{1}\alpha_{2}}\cdots A^{s_{n}}_{\alpha_{n}\alpha_{1}}\Phi^{s_{1}\cdots s_{n}} = 0])$ and $\mu(\mathcal{V}[\sum_{\{\alpha, s\}}A^{s_{1}}_{\alpha_{1}\alpha_{2}}\cdots A^{s_{n}}_{\alpha_{n}\alpha_{1}}\Phi^{s_{1}\cdots s_{n}} > 0])$ vanish. It is easy to use $\prod_{i}A^{i}\Phi(\mathbf{x})$ to construct arbitrary simple functions. Since the simple function space is dense in $L^{\infty}(\mathbb{R}^{n})$, we know $\mu(\mathbf{x})$ vanishes. So we proved that the finite, signed and regular Borel measure $\mu \in M(I^{n})$ vanishes if the integral \eqref{eq: sigmoid_integral} is $0$. 
\end{proof}

Actually we can relax the assumption that the activation function is continuous in \ref{lemma: disc_lemma} to arbitrary measurable sigmoidal function since in the dominated convergence theorem we just need to assume that the function sequence is measurable but not continuous. 
Now we are ready to prove our main theorem \ref{the: main_the}.

\vspace{5pt}
{\bf \noindent Proof of Main Theorem \ref{the: main_the}.} 
We assume that closure of MPS function family $\bar{\mathcal{M}}$ is a proper subset of $C^{0}(I^{n})$.
In \ref{lemma: linearity_mps}, we show $\mathcal{M}$ is a vector subspace of $C^{0}(I^{n})$, so by Hahn-Banach theorem, there exists a non-trivial functional which annihilate the subspace $\bar{\mathcal{M}}$. By Riesz Representation Theorem, this means there exists a non-trivial finite signed regular Borel measure $\mu \in M(I^{n})$ which annihilate $\bar{\mathcal{M}}(I^{n})$. However, in \ref{lemma: disc_lemma}, we show that the measure $\mu$ which annihilate the functions with the $\sigma(\cdot)$ as activation \ref{eq: rescaling_sigmoid} itself is trivial. This contradicts the fact that a non-trivial measure exists which is derived from the Hahn-Banach theorem. So $\bar{\mathcal{M}}$ cannot be the proper subset of $C^{0}(I^{n})$, namely $\mathcal{M}$ is dense in $C^{0}(I^{n})$.
\hfill$\square$

\vspace{5pt}
We prove that $\mathcal{M}$ is dense in $C^{0}(I^{n})$ with respect to the supremum norm. With the same idea and the isometry between $L^{1}(I^{n})$ and $L^{\infty*}(I^{n})$, we can also show that the function space of MPS with measurable (bounded) $\sigma(\cdot)$ is dense in $L^{1}(I^{n})$ space. We can extend above conclusion to an arbitrary compact subset of $\mathbb{R}^{n}$ with the scaling transformation.

\subsection{Relation between MPS and One-Hidden-Layer Neural Networks}
\label{sec: relation_mps_neural_nets}

In this section we will compare the structure of MPS with that of Neural Networks and study the relation between them. We will show that the MPS function defined by \eqref{eq: modified_mps} is equivalent to the \textit{one-hidden-layer neural networks equipped with kernel functions}. The relation between MPS and Restricted Boltzmean Machine (RBM) and some other models was studied in \citep{deng2017quantum, cai2018approximating, chen2018equivalence}. In these work, it is shown that by carefully designed operations, MPS can be transformed into the other equivalent models such as RBM.  
Here we introduce the theorem on the equivalence between MPS and neural networks as follows, 
\begin{theorem}
\label{the: mps_equivalence}
For a MPS in the form as \eqref{eq: modified_mps} with the scale-invariant sigmoidal function $\sigma(\cdot)$ \eqref{eq: sigmoid} as activation, there exists an equivalent one-hidden-layer neural network with the kernel function $\Phi(\mathbf{x})$ which maps the input $\mathbf{x}$ into higher dimensional feature space. Here equivalence means that for arbitrary data set $\{\mathbf{x}_{i}\}$, the output of the MPS and the corresponding neural network are the same.
\end{theorem}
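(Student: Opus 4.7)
The plan is to exhibit an explicit identification between the pre-activation inside the MPS sigmoid and the pre-activation of a one-hidden-layer network acting on the feature vector $\Phi(\mathbf{x})$. The core observation is already visible in the definition (\ref{eq: modified_mps}): for each fixed label $l$, the argument of $\sigma(\cdot)$ is linear in the components of the kernel tensor $\Phi^{s_{1}\cdots s_{n}}(\mathbf{x})$, with coefficients that are pure contractions of the MPS tensors and therefore independent of $\mathbf{x}$.

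Concretely, I would first define, for each value of the output index $l$, the contracted weight tensor
$$W^{l}_{s_{1}\cdots s_{n}} \;:=\; \sum_{\{\alpha\}} A^{s_{1}}_{\alpha_{1}\alpha_{2}}\cdots A^{ls_{i}}_{\alpha_{i}\alpha_{i+1}} \cdots A^{s_{n}}_{\alpha_{n}\alpha_{1}}.$$
Substituting this back into (\ref{eq: modified_mps}) yields
$$\Psi(\mathbf{x}\,|\,\mathbf{w},\mathcal{B}) \;=\; \sum_{l}\sigma\!\left(\sum_{\{s\}} W^{l}_{s_{1}\cdots s_{n}}\,\Phi^{s_{1}\cdots s_{n}}(\mathbf{x})\right),$$
which is exactly the functional form of a one-hidden-layer neural network whose input layer emits the $d^{n}$-dimensional feature vector $\Phi(\mathbf{x})$, whose hidden layer has one neuron per value of $l$ with weight vector $W^{l}$ and activation $\sigma$, and whose output layer is a sum with unit weights. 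Evaluating both sides on any data point $\mathbf{x}_{i}$ gives the pointwise equivalence claimed by the theorem.

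Two small bookkeeping issues need to be checked before closing the argument. First, a generic one-hidden-layer network also carries an output weight and a bias inside each $\sigma$; the scale-invariance of $\sigma$ (already exploited in Lemma \ref{lemma: linearity_mps}) allows any output scalar to be absorbed into the $A$ tensors, and a bias can be accommodated either by appending a constant component to each local map $\phi^{s_{i}}$ or via the shift-and-dilate trick used in Lemma \ref{lemma: disc_lemma}. Second, one should verify that the above contraction is well-defined for the open- and periodic-boundary conventions used earlier; this is immediate since the sums over $\alpha$'s are finite.

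The main obstacle, and the only place where care is genuinely required, is the interpretation of ``equivalent'' in the reverse direction: given an arbitrary weight tensor $W^{l}_{s_{1}\cdots s_{n}}$ of the hidden layer, can it always be written as the contracted MPS above? The answer is yes via a tensor-train decomposition (iterated SVD along the chain), but the bond dimension $|\alpha_{i}|$ required may grow as $O(d^{n/2})$, so the two model classes coincide as function families while their parameter counts differ. Since the theorem statement only asks for the forward direction (MPS $\Rightarrow$ equivalent neural network), the straightforward contraction argument above suffices; I would remark on the converse only as a commentary on the sharpness of the correspondence.
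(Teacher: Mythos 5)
Your proof is correct and follows essentially the same route as the paper: contract the bond indices $\{\alpha_i\}$ into a weight tensor $W^{l}_{s_1\cdots s_n}$, flatten the multi-index to view $\Phi(\mathbf{x})$ as a $d^{n}$-dimensional feature vector, absorb output weights via the scale invariance of $\sigma(\cdot)$, and obtain a bias by fixing a constant component of each $\phi^{s_i}$. Your closing remark on the converse (tensor-train decomposition with possibly large bond dimension) is a nice addition but not needed for the stated one-directional equivalence.
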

As is well known, symmetry arises naturally in quantum system and  it leads to a lot of important properties such as the conservation of stress tensor, the degeneracy of energy level, etc. 
In MPS, some internal symmetries (gauge symmetries) which determines the physical properties of the many-body quantum systems exists  \citep{perez2006matrix}. Here actually we are more interested in the gauge symmetry or in other word, the 'redundancy' in the representation of MPS \citep{perez2006matrix}. The idea is that we can always plug a pair of unitary (orthogonal) matrices whose product is the identity matrix into the MPS to transform each tensor locally but keep the product invariant. To fix this freedom, we can use the Schmidt decomposition to write the MPS into the canonical form following three gauge conditions (please see \cite{perez2006matrix} for more details.

\vspace{5pt}
{\bf \noindent Proof of Theorem \ref{the: mps_equivalence}.} 
If we ignore the physical meaning of MPS and from the data point of view, to fix the redundancy of this model, namely the bond dimension $\alpha$, we just need to contract the internal indices $\{\alpha_{i}\}$ which do not couple with the data \footnote{We note here that the operations of contraction indices $\{\alpha_{i}\}$ commute with each other, so the order of the contractions not affect the results.}. So by contracting all the internal indices in the MPS $\Psi(\mathbf{x})$ \eqref{eq: modified_mps}, we get
\begin{align}
    \Psi(\mathbf{x}) &= \sum_{l}\sigma(\sum _{\{\alpha, s\}}A^{s_{1}}_{\alpha_{1}\alpha_{2}}\cdots A^{s_{i}l}_{\alpha_{i}\alpha_{i+1}}\cdots A^{s_{n}}_{\alpha_{n}\alpha_{1}}\Phi^{s_{1}\cdots s_{n}}(\mathbf{x}))\\
    &= \sum_{l}\sigma(\sum_{\{s\}}W_{[1]}^{l s_{1}\cdots s_{n}}\Phi^{s_{1}\cdots s_{n}}(\mathbf{x})).
\end{align}
Actually there are two natural isomorphisms as follows, 
\begin{align}
    &f_{1}: W_{[1]}^{ls_{1}\cdots s_{n}} \to W_{[1]}^{ls}, \\
    &f_{2}: \Phi^{s_{1}\cdots s_{n}}(\mathbf{x}) \to \Phi^{s}(\mathbf{x}),
\end{align}
where $|s| = \prod_{i}|s_{i}|$. 

Considering the activation function in \eqref{eq: rescaling_sigmoid}, we can always rescale it to get $W_{2}\sigma(\cdot)$, then we have
\begin{align}
    \Psi(\mathbf{x}) &= \sum_{l}W_{[2]}\sigma(\sum_{s}W_{[1]}^{ls}\Phi^{s}(\mathbf{x})) \\
    &= \sum_{l}W^{l}_{[2]}\sigma(\sum_{s}W_{[1]}^{ls}\Phi^{s}(\mathbf{x})).
\end{align}

We can introduce interception $\mathbf{b}$ into the $\sigma(\cdot)$ by setting at least one component of each kernel $\phi^{s_{i}}(x^{i})$ to be $1$, and this leads to at least one component of $\Phi^{s}(\mathbf{x})$ to be $1$.
\hfill$\square$

\begin{example}
We consider a MPS as follows, 
\begin{align}
    \Psi(\mathbf{x}) = \sum_{l}\sigma(\sum_{\{\alpha, s\}}A^{s_{1}}_{(1)}A^{ls_{2}}_{(2)}A^{s_{3}}_{(3)}\phi^{s_{1}}(x_{1})\phi^{s_{2}}(x_{2})\phi^{s_{3}}(x_{3})),
\end{align}
where
\begin{align}
   &A^{s_{1}}_{(1)} = [A^{1}_{(1)}, A^{2}_{(1)}], \\
   &A^{ls_{2}}_{(2)} = 
   \begin{bmatrix}
   A^{11}_{(2)} & A^{12}_{(2)}\\
   A^{21}_{(2)} & A^{22}_{(2)}
   \end{bmatrix}, \\
   &A^{s_{3}}_{(3)} = [A^{1}_{(3)}, A^{2}_{(3)}].
\end{align}
We note here that $A^{i}_{(1)}$ and $A^{i}_{(3)}$ are two dimensional vectors and $A^{ij}_{(2)}$ is $2 \times 2$ matrix.
And the kernel function is as 
\begin{align}
\label{eq: x_1_kernel}
    \phi^{1}(x_{1}) = [x_{1}, 1],
    \quad
    \phi^{2}(x_{2}) = [x_{2}, 1], 
    \quad
    \phi^{3}(x_{3}) = [x_{3}, 1].
\end{align}
We can get the equivalent neural network as 
\begin{align}
    \Psi(\mathbf{x}) &= \sum_{l}\sigma(\sum_{i}W^{li}\Phi^{i})\\
    W^{li} &= 
    \begin{bmatrix}
    W^{1111} & W^{1211} & W^{1121} & W^{1112} & 
    W^{1122} & W^{1212} & 
    W^{1221} & W^{1222} \\
    W^{2111} & W^{2211} & W^{2121} & W^{2112} & 
    W^{2122} & W^{2212} & W^{2221} & W^{2222} 
    \end{bmatrix}\\
    (\Phi^{i})^{T} &= 
    \begin{bmatrix}
    \label{eq: poly_kernel}
    x_{1}x_{2}x_{3} & 
    x_{2}x_{3} & 
    x_{1}x_{3} & 
    x_{1}x_{2} &
    x_{1} &
    x_{2} &
    x_{3} &
    1
    \end{bmatrix},
\end{align}
where 
\begin{align}
    W^{ijkl} = A^{j}_{(1)}\cdot A^{ik}_{(2)}\cdot A^{l}_{(3)}, 
    \quad
    i, j, k, l \in \{ 1, 2 \} .
\end{align}
\end{example}

In above example, we show that using the kernel function $\Phi^{s_{1}\cdots s_{n}}(\mathbf{x})$ \eqref{eq: x_1_kernel}, the polynomial kernel $\Phi^{s}(\mathbf{x})$ \eqref{eq: poly_kernel}. With careful design of the each component $\phi^{s_{i}}$, other interesting kernel functions can also be realized.

From above analysis, we know that the pure MPS \eqref{eq: pure_mps} is just the single-layer neural networks with the kernel function defined by $\Phi^{s}(\mathbf{x})$ which can introduce the coupling of the components of data points $\mathbf{x}^{(i)}$ into the model. We know that one-layer neural networks cannot represent arbitrary boolean function such as the 'XOR' gate, however arbitrary boolean functions can be implemented by the pure MPS thanks to the kernel function $\Phi^{s}(\mathbf{x})$. 
In next example, we will show the equivalent neural network configure for the MPS which implements the $3$-input Parity Gate in \ref{exp: parity_fun}.    

\begin{example} For the MPS given in \ref{exp: parity_fun} which realizes the $3$-input Parity Gate, we can get the equivalent single-layer neural network as follows,
\begin{align}
    W^{s} &= 
    \begin{bmatrix}
    1 & 0 & 0 & 1 & 0 & 1 & 1 & 0
    \end{bmatrix}, \\
    (\Phi^{s})^{T} &=
    \begin{bmatrix}
    \Phi^{111} & \Phi^{112} & \Phi^{121} & \Phi^{122} & 
    \Phi^{211} & \Phi^{212} & \Phi^{221} & \Phi^{222} 
    \end{bmatrix},
\end{align}
where 
\begin{align}
    \Phi^{ijk}(\mathbf{x}) = x_{1}^{2-i}(1-x_{1})^{i-1}x_{2}^{2-j}(1-x_{2})^{j-1}x_{3}^{2-k}(1-x_{3})^{k-1}.
\end{align}
\end{example}
\subsection{Infinitely Wide Limit of MPS from the Perspective of Neural Networks}

It is already shown that infinitely wide MPS is quivalent to the Gaussian Process (G.P.) in \citep{guo2021infinitely}. Here we can also review the relation between MPS and GP considering the equivalent relation between MPS and one-hidden-layer neural networks. 

By setting the width of the MPS \eqref{eq: pure_mps} to be infinite \footnote{At the same time, one of the bond dimension $|\alpha|$ should be set to be infinite.}, the equivalent neural network is as 
\begin{align}
    \Psi(\mathbf{x}) = \sum_{s}W^{s}\Phi^{s}(\mathbf{x}), 
\end{align}
where $|s| = \prod_{i}|s_{i}|$\footnote{In \citep{guo2021infinitely}, we also mentioned that if one $s_{i}$ goes to infinity, MPS also will converge to GP. It is easy to verify this fact here.}. So as the width $i$ goes to infinity, $|s|$ will be infinite. Since all $A^{s_{i}}_{\alpha_{i}\alpha_{i+1}}$ are independent and identically distributed (i.i.d.), it is easy to verify that all components $W^{s}$ are i.i.d. and also belong to Normal
by Central Limit Theorem (C.L.T.). $\Psi(\mathbf{x})$ converges to the GP defined on the data set $\{\mathbf{x}_{i}, i\in\mathbb{Z}\}$. 

It is already shown that one-hidden-layer neural networks converge to GP as the number of hidden neurons goes to infinity in Neal's work \citep{neal1995bayesian}. Since the activated MPS \eqref{eq: modified_mps} is equivalent to the one-hidden-layer neural networks as demonstrated in \ref{sec: relation_mps_neural_nets} and the edge $l$ is just the index of the hidden layer of the equivalent neural networks, we can show that the activated MPS will converge to GP as $l$ goes to infinity. Actually this is already proved in \citep{guo2021infinitely}. The so-called 'tensor neural networks' ('MPS hidden layer nerual Networks') in \citep{guo2021infinitely} is the same\footnote{The interception $\mathbf{b}$ does not affect the asymptotic behavior since the sum of normal random variables is still normal.} as the activated MPS \eqref{eq: modified_mps} since a tunable parameter can always be absorbed into the $\sigma(\cdot)$ \eqref{eq: rescaling_sigmoid}.

%


\section{Conclusion}
\label{sec: conclusion}

We study the universal representation power of MPS from the perspective of the boolean function space and the continuous function space $C^{0}(I^{n})$. We show that MPS can represent arbitrary boolean functions. Moreover, MPS can approximate any continuous functions defined on the compact subspace of $R^{n}$ arbitrarily well in the sense of supremum norm. Using the duality property of $L^{p}(I^{n})$ space, we can also extend the proof to the $L^{1}(I^{n})$ space and show that the activated MPS is also dense in $L^{1}(I^{n})$ space with respect to the $L^{1}$ norm. Actually the main idea in our proof is based on the work by Cybenko, Hornik, etc. \citep{cybenko1989approximation, hornik1991approximation} and the tools we utilized in our proof mainly come from functional analysis. 

We study the relation between the MPS and the neural networks and show that the activated MPS can be represented by one-hidden-layer neural networks with specific kernel functions which are determined by the structure of the kernel $\Phi^{s_{1}, \cdots, s_{n}}$ in MPS. Since the introducing of the kernels, the coupling of the components of the data point $\mathbf{x}$ can be naturally injected into the model which is similar to the idea of 'kernel machine'. We construct the equivalent neural networks of several specific MPS models and obtain 'polynomial kernel' by setting one component of each $\phi^{s_{i}}(x_{i})$ to be $1$ as \eqref{eq: poly_kernel}.

The MPS is inspired by quantum physics and achieves great successes in different areas. Considering the quantum effects of the many-body system, MPS is an effective ansats for many-body wave functions which can approximate the exact function of the system well. Several interesting physical properties of condensed-matter systems can be studied by MPS simulation. 

The bond indices $\{\alpha_{i}, i\in\{1, \cdots, n\}\}$ which connect all the tensors $A^{s_{i}}$ control the entanglement of the tensor pairs near with each other is a key property of MPS. However if we just treat it as a model applied in statistical learning tasks, etc. where no direct physical correspondence exists, $\{\alpha_{i}\}$ will not be the advantage of this model from above analysis, instead the design of the $\Phi^{s_{1}, \cdots, s_{n}}(\mathbf{x})$ becomes subtle and important now. An extreme case is that if we just use $\phi^{s_{i}}(x_{i})=x_{i}$, then pure MPS is just the single-layer neural networks without interception term whose representation power is hugely limited. 

In MPS, the bond indices $\{\alpha_{i}\}$ does not couple with the input $\mathbf{x}$ so we can contract them without changing the model's output like we did above, however in neural networks with the nested structure as $\sum{W^{[n]}\cdot\sigma(\sum{W^{[n-1]}\cdot\sigma(\cdot)+ b^{[n-1]})} + b^{[n]}}$, every layer is activated by a non-linear function and the 'gauge symmetry' (redundancy) of the indices between two connecting layers is breaking which means we cannot contract them without knowing the input $\mathbf{x}$, namely the data $\mathbf{x}$ couples with the indices in each layer. In deep neural networks, multiple simple layers are used to transform the original data into the more complicated or higher-level feature space and the map is 'learned' from the data, however the structure of MPS implicitly maps the input into a high dimensional kernel space and also the correlation of different components of the data is automatically feed into the model.






\section*{Acknowledgements}
The authors wish to thank David Helmbold, Hongyun Wang, J. Xavier Prochaska, Qi Gong, Torsten Ehrhardt, Hai Lin and Francois Monard for their helpful discussions. 


\bibliographystyle{./ims}

\bibliography{reference.bib}

\begin{thebibliography}{32}
\expandafter\ifx\csname natexlab\endcsname\relax\def\natexlab#1{#1}\fi
\expandafter\ifx\csname url\endcsname\relax
  \def\url#1{\texttt{#1}}\fi
\expandafter\ifx\csname urlprefix\endcsname\relax\def\urlprefix{}\fi

\bibitem[{Biamonte(2019)}]{biamonte2019lectures}
\text{Biamonte, J.} (2019).
\newblock Lectures on quantum tensor networks.
\newblock \textit{arXiv preprint arXiv:1912.10049}.

\bibitem[{Biamonte and Bergholm(2017)}]{biamonte2017tensor}
\text{Biamonte, J.} and \text{Bergholm, V.} (2017).
\newblock Tensor networks in a nutshell.
\newblock \textit{arXiv preprint arXiv:1708.00006}.

\bibitem[{Biamonte et~al.(2011)Biamonte, Clark and
  Jaksch}]{biamonte2011categorical}
\text{Biamonte, J.~D.}, \text{Clark, S.~R.} and \text{Jaksch, D.} (2011).
\newblock Categorical tensor network states.
\newblock \textit{AIP Advances}, \textbf{1} 042172.

\bibitem[{Cai and Liu(2018)}]{cai2018approximating}
\text{Cai, Z.} and \text{Liu, J.} (2018).
\newblock Approximating quantum many-body wave functions using artificial
  neural networks.
\newblock \textit{Physical Review B}, \textbf{97} 035116.

\bibitem[{Chen et~al.(2018)Chen, Cheng, Xie, Wang and
  Xiang}]{chen2018equivalence}
\text{Chen, J.}, \text{Cheng, S.}, \text{Xie, H.}, \text{Wang, L.} and
  \text{Xiang, T.} (2018).
\newblock Equivalence of restricted boltzmann machines and tensor network
  states.
\newblock \textit{Physical Review B}, \textbf{97} 085104.

\bibitem[{Cichocki et~al.(2016)Cichocki, Lee, Oseledets, Phan, Zhao and
  Mandic}]{cichocki2016tensor}
\text{Cichocki, A.}, \text{Lee, N.}, \text{Oseledets, I.}, \text{Phan, A.-H.},
  \text{Zhao, Q.} and \text{Mandic, D.~P.} (2016).
\newblock Tensor networks for dimensionality reduction and large-scale
  optimization: Part 1 low-rank tensor decompositions.
\newblock \textit{Foundations and Trends{\textregistered} in Machine Learning},
  \textbf{9} 249--429.

\bibitem[{Cirac and Verstraete(2009)}]{cirac2009renormalization}
\text{Cirac, J.~I.} and \text{Verstraete, F.} (2009).
\newblock Renormalization and tensor product states in spin chains and
  lattices.
\newblock \textit{Journal of Physics A: Mathematical and Theoretical},
  \textbf{42} 504004.

\bibitem[{Cybenko(1989)}]{cybenko1989approximation}
\text{Cybenko, G.} (1989).
\newblock Approximation by superpositions of a sigmoidal function.
\newblock \textit{Mathematics of control, signals and systems}, \textbf{2}
  303--314.

\bibitem[{Deng et~al.(2017)Deng, Li and Sarma}]{deng2017quantum}
\text{Deng, D.-L.}, \text{Li, X.} and \text{Sarma, S.~D.} (2017).
\newblock Quantum entanglement in neural network states.
\newblock \textit{Physical Review X}, \textbf{7} 021021.

\bibitem[{Deutsch(1989)}]{deutsch1989quantum}
\text{Deutsch, D.~E.} (1989).
\newblock Quantum computational networks.
\newblock \textit{Proceedings of the Royal Society of London. A. Mathematical
  and Physical Sciences}, \textbf{425} 73--90.

\bibitem[{Efthymiou et~al.(2019)Efthymiou, Hidary and
  Leichenauer}]{efthymiou2019tensornetwork}
\text{Efthymiou, S.}, \text{Hidary, J.} and \text{Leichenauer, S.} (2019).
\newblock Tensornetwork for machine learning.
\newblock \textit{arXiv preprint arXiv:1906.06329}.

\bibitem[{Evenbly and Vidal(2011)}]{evenbly2011tensor}
\text{Evenbly, G.} and \text{Vidal, G.} (2011).
\newblock Tensor network states and geometry.
\newblock \textit{Journal of Statistical Physics}, \textbf{145} 891--918.

\bibitem[{Feynman(1986)}]{feynman1986quantum}
\text{Feynman, R.} (1986).
\newblock Quantum-mechanical computers, suc.
\newblock In \textit{Phys. Sci}, vol. 149.

\bibitem[{Folland(2013)}]{folland2013real}
\text{Folland, G.} (2013).
\newblock \textit{Real Analysis: Modern Techniques and Their Applications}.
\newblock Pure and Applied Mathematics: A Wiley Series of Texts, Monographs and
  Tracts, Wiley.
\newline\urlprefix\url{https://books.google.com/books?id=wI4fAwAAQBAJ}

\bibitem[{Gallego and Orus(2017)}]{gallego2017language}
\text{Gallego, A.~J.} and \text{Orus, R.} (2017).
\newblock Language design as information renormalization.
\newblock \textit{arXiv preprint arXiv:1708.01525}.

\bibitem[{Glasser et~al.(2018)Glasser, Pancotti and
  Cirac}]{glasser2018supervised}
\text{Glasser, I.}, \text{Pancotti, N.} and \text{Cirac, J.~I.} (2018).
\newblock Supervised learning with generalized tensor networks.
\newblock \textit{arXiv preprint arXiv:1806.05964}.

\bibitem[{Guo and Draper(2021)}]{guo2021infinitely}
\text{Guo, E.} and \text{Draper, D.} (2021).
\newblock Infinitely wide tensor networks as gaussian process.
\newblock \textit{arXiv preprint arXiv:2101.02333}.

\bibitem[{Han et~al.(2018)Han, Wang, Fan, Wang and Zhang}]{han2018unsupervised}
\text{Han, Z.-Y.}, \text{Wang, J.}, \text{Fan, H.}, \text{Wang, L.} and
  \text{Zhang, P.} (2018).
\newblock Unsupervised generative modeling using matrix product states.
\newblock \textit{Physical Review X}, \textbf{8} 031012.

\bibitem[{Hornik(1991)}]{hornik1991approximation}
\text{Hornik, K.} (1991).
\newblock Approximation capabilities of multilayer feedforward networks.
\newblock \textit{Neural networks}, \textbf{4} 251--257.

\bibitem[{LeCun et~al.(2015)LeCun, Bengio and Hinton}]{lecun2015deep}
\text{LeCun, Y.}, \text{Bengio, Y.} and \text{Hinton, G.} (2015).
\newblock Deep learning.
\newblock \textit{nature}, \textbf{521} 436--444.

\bibitem[{Neal(1995)}]{neal1995bayesian}
\text{Neal, R.~M.} (1995).
\newblock \textit{BAYESIAN LEARNING FOR NEURAL NETWORKS}.
\newblock Ph.D. thesis, University of Toronto.

\bibitem[{Novikov et~al.(2016)Novikov, Trofimov and
  Oseledets}]{novikov2016exponential}
\text{Novikov, A.}, \text{Trofimov, M.} and \text{Oseledets, I.} (2016).
\newblock Exponential machines.
\newblock \textit{arXiv preprint arXiv:1605.03795}.

\bibitem[{Or{\'u}s(2014{\natexlab{a}})}]{orus2014advances}
\text{Or{\'u}s, R.} (2014{\natexlab{a}}).
\newblock Advances on tensor network theory: symmetries, fermions,
  entanglement, and holography.
\newblock \textit{The European Physical Journal B}, \textbf{87} 1--18.

\bibitem[{Or{\'u}s(2014{\natexlab{b}})}]{orus2014practical}
\text{Or{\'u}s, R.} (2014{\natexlab{b}}).
\newblock A practical introduction to tensor networks: Matrix product states
  and projected entangled pair states.
\newblock \textit{Annals of Physics}, \textbf{349} 117--158.

\bibitem[{Penrose(1971)}]{penrose1971applications}
\text{Penrose, R.} (1971).
\newblock Applications of negative dimensional tensors.
\newblock \textit{Combinatorial mathematics and its applications}, \textbf{1}
  221--244.

\bibitem[{Perez-Garcia et~al.(2006)Perez-Garcia, Verstraete, Wolf and
  Cirac}]{perez2006matrix}
\text{Perez-Garcia, D.}, \text{Verstraete, F.}, \text{Wolf, M.~M.} and
  \text{Cirac, J.~I.} (2006).
\newblock Matrix product state representations.
\newblock \textit{arXiv preprint quant-ph/0608197}.

\bibitem[{Stinchcombe and White(1990)}]{stinchcombe1990approximating}
\text{Stinchcombe, M.} and \text{White, H.} (1990).
\newblock Approximating and learning unknown mappings using multilayer
  feedforward networks with bounded weights.
\newblock In \textit{1990 IJCNN International Joint Conference on Neural
  Networks}. IEEE.

\bibitem[{Stoudenmire and Schwab(2016)}]{stoudenmire2016supervised}
\text{Stoudenmire, E.} and \text{Schwab, D.~J.} (2016).
\newblock Supervised learning with tensor networks.
\newblock In \textit{Advances in Neural Information Processing Systems}.

\bibitem[{Verstraete et~al.(2004)Verstraete, Porras and
  Cirac}]{verstraete2004density}
\text{Verstraete, F.}, \text{Porras, D.} and \text{Cirac, J.~I.} (2004).
\newblock Density matrix renormalization group and periodic boundary
  conditions: A quantum information perspective.
\newblock \textit{Physical review letters}, \textbf{93} 227205.

\bibitem[{Vidal(2003)}]{vidal2003efficient}
\text{Vidal, G.} (2003).
\newblock Efficient classical simulation of slightly entangled quantum
  computations.
\newblock \textit{Physical review letters}, \textbf{91} 147902.

\bibitem[{Vidal(2007)}]{vidal2007classical}
\text{Vidal, G.} (2007).
\newblock Classical simulation of infinite-size quantum lattice systems in one
  spatial dimension.
\newblock \textit{Physical review letters}, \textbf{98} 070201.

\bibitem[{White(1992)}]{white1992density}
\text{White, S.~R.} (1992).
\newblock Density matrix formulation for quantum renormalization groups.
\newblock \textit{Physical review letters}, \textbf{69} 2863.

\end{thebibliography}

\clearpage

\appendix{}
%
%
%
%

\end{document}